\documentclass[10pt,twocolumn,letterpaper]{article}
\usepackage[pagenumbers]{cvpr}
\usepackage[utf8]{inputenc}
\usepackage[T1]{fontenc}
\usepackage{microtype}
\usepackage{amsthm}
\usepackage{multirow}
\usepackage{makecell}
\usepackage{array}
\usepackage{tabularx}
\usepackage{colortbl}
\usepackage{bm}
\usepackage{cuted}
\usepackage{placeins}

\definecolor{cvprblue}{rgb}{0.21,0.49,0.74}
\usepackage[breaklinks,colorlinks,allcolors=cvprblue,hyperfootnotes=false]{hyperref}
\newtheorem{proposition}{Proposition}
\title{OmniCam: Omni-Camera Trajectory Generation\\ via Geometry-Grounded Pose Token Learning}
\author{Zhenyang Liu$^{1,2,3}$\quad Chenjie Cao$^{3}$\quad Yisu Zhang$^{4,3}$\quad Xuhui Zuo$^{3}$\\
Xiangyang Xue$^{1,\ddagger}$\quad Yanwei Fu$^{1,2,\ddagger}$\quad Tengfei Wang$^{3,\dagger}$\quad Chunchao Guo$^{3}$\\[3pt]
{\normalsize $^{1}$Fudan University\quad $^{2}$Shanghai Innovation Institute\quad $^{3}$Tencent Hunyuan\quad $^{4}$Zhejiang University}\\[3pt]
{\large Project Page: \href{https://zhenyangliu.github.io/OmniCam-Website/}{ZhenyangLiu.github.io/OmniCam-Website}}}

\makeatletter
\patchcmd{\@maketitle}{\vskip .5em}{\vskip 0pt}{}{\PackageError{omnicam}{Title spacing patch failed}{}}
\patchcmd{\@maketitle}{\vspace*{12pt}}{\vspace*{0pt}}{}{\PackageError{omnicam}{Title spacing patch failed}{}}
\makeatother
\hypersetup{pdftitle={OmniCam: Omni-Camera Trajectory Generation via Geometry-Grounded Pose Token Learning},pdfsubject={Computer Vision and Pattern Recognition},pdfauthor={Zhenyang Liu, Chenjie Cao, Yisu Zhang, Xuhui Zuo, Xiangyang Xue, Yanwei Fu, Tengfei Wang, Chunchao Guo}}

\begin{document}
\maketitle

\begin{strip}
    \centering
    \captionsetup{type=figure,hypcap=false}
    \includegraphics[width=1.0\linewidth]{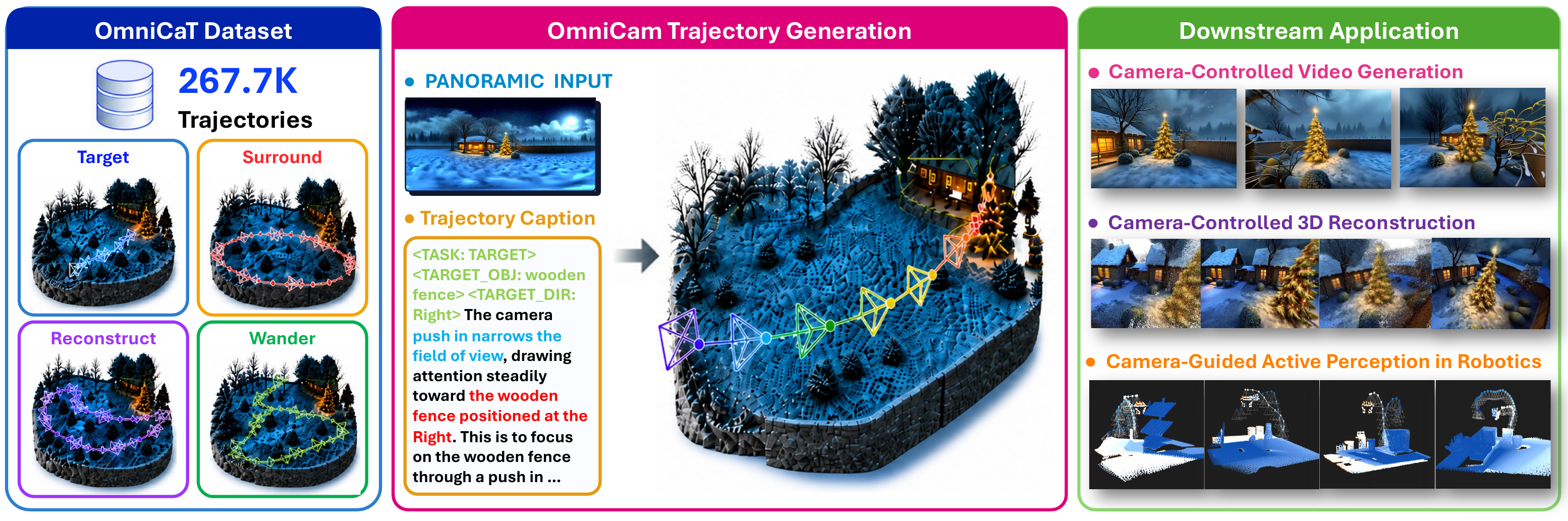}
    \captionof{figure}{
        \textbf{Overview of OmniCam.}
        Given a single panorama and textual trajectory descriptions, OmniCam generates spatially aware camera trajectories (SE(3) pose sequences).
        OmniCam supports four camera behaviors that enable various downstream applications such as camera-controlled video generation, 3D reconstruction, and active perception in robotics.
    }
    \label{fig:teaser}
\end{strip}

\begingroup
\renewcommand{\thefootnote}{}
\footnotetext{$^{\dagger}$Project Lead.\quad $^{\ddagger}$Corresponding authors.}
\endgroup

\begin{abstract}
Camera trajectories control viewpoint changes in video generation, scene reconstruction, and robotic perception. Generating them from language requires both scene geometry and target-aware framing.
We introduce \textbf{OmniCam}, an autoregressive model that generates camera pose sequences from a single panorama and textual trajectory descriptions. Its geometry-grounded pose token learning combines three components: a panoramic point-cloud encoder for omnidirectional geometric context; hybrid absolute-rotation and relative-translation tokenization with temporally consistent quaternion signs; and separate geometric and semantic conditioning streams with an explicit 3D target anchor.
We also construct \textbf{OmniCaT}, containing 267,700 trajectories across four camera behaviors. On the reported OmniCaT evaluation, OmniCam reduces trajectory errors by 28--47\% and collision rate by 65.8\% relative to GenDoP retrained on OmniCaT. Against the best baseline for each metric, the ATE and collision reductions are 43.0\% and 62.3\%, respectively. Component ablations support the use of geometric and target-aware conditioning, while downstream experiments examine camera-controlled video generation and robotic active perception.

\end{abstract}

\section{Introduction}
\label{sec:intro}
Camera trajectories serve as a control interface for spatial-intelligence systems, including world models~\cite{alonso2024diffusion}, video generation~\cite{he2024cameractrl,wan2025wan}, and robotics~\cite{intelligence2025pi,kim2024openvla,bar2025navigation}. Specifying a trajectory requires deciding \emph{where to move}, \emph{how to orient}, and \emph{what to attend to}. A prescribed orbit or linear sweep may fail when it intersects scene geometry or loses the intended target. We study generating scene-aware camera trajectories from a panoramic observation and a textual instruction (\Cref{fig:teaser}).

Recent trajectory generators~\cite{jiang2024cinematographic,courant2024exceptional,li2024director3d,zhang2025gendop} motivate three design questions.
\emph{(i)~Scene coverage}: perspective RGBD inputs~\cite{zhang2025gendop} provide limited angular coverage, while geometry-based planners require explicit spatial representations~\cite{putz20163d,ruetz2019ovpc}. A panorama provides observations in all directions, although occluded surfaces remain unknown.
\emph{(ii)~Pose serialization}: quantization trades precision against token vocabulary size. Quaternion sign ambiguity ($q\equiv -q$), coordinate scale, and the accumulation of relative increments can affect the resulting learning problem.
\emph{(iii)~Conditioning}: geometric free-space information and language-referenced targets serve different roles in planning. Providing dedicated pathways for these cues is a useful architectural choice whose value should be tested empirically.
We use \emph{geometry-grounded pose token learning} to describe a design that connects geometric perception, pose serialization, and target-aware conditioning.

\textbf{OmniCam} implements this design with a panoramic point-cloud encoder, hybrid pose tokenization, and a dual-branch conditioning architecture. The point cloud, reconstructed from estimated panoramic depth, provides omnidirectional but partial geometry. Rotations are encoded absolutely with sequential quaternion sign alignment; translations are encoded as increments in a scene-normalized coordinate system. Sign alignment selects consistent representatives of the quaternion double-cover, while quantization remains lossy. Absolute rotations avoid composition of rotation-quantization residuals; relative translations trade a smaller per-step range against accumulated displacement error. We analyze these distinctions in \Cref{sec:theory_appendix} and compare the encodings empirically.

Separate geometric and semantic pathways supply the decoder with structural and target-related features. An attention-derived 3D target centroid is injected through gated cross-attention to provide a spatial anchor for object-directed trajectories. The geometry pathway does not directly receive text; this routing does not imply statistical independence or guarantee the absence of interference after fusion. At inference, a format constraint restricts termination to complete pose-token groups.

Training requires trajectories paired with scene observations and instructions. Existing collections~\cite{yu2023mvimgnet,zhou2018stereo,ling2024dl3dv,zhang2025gendop} cover recorded object, scene, and cinematic motions. We construct \textbf{OmniCaT} by recovering geometry and semantic targets from panoramas and synthesizing four behaviors: Target, Surrounding, Reconstruction, and Wandering. The dataset contains 267,700 trajectories with hierarchical descriptions of task, target, and motion. Because supervision is generated on estimated geometry, the learned model may inherit reconstruction errors and planner preferences.

Our contributions are:
\begin{itemize}
\item \textbf{Panorama-conditioned trajectory generation.} OmniCam combines explicit point-cloud features, dedicated geometric and semantic pathways, and a 3D target anchor to generate camera trajectories from panoramic imagery and language.
\item \textbf{Hybrid pose serialization.} We combine absolute rotations, relative translations, and consistent quaternion signs; provide qualified error and representation analyses; and compare the choices through ablations.
\item \textbf{OmniCaT and empirical evaluation.} We introduce a 267,700-trajectory dataset spanning four behaviors. The reported OmniCaT results improve on retrained baselines, with component ablations and downstream video and robotic-perception evaluations.
\end{itemize}

\section{Related Work}
\noindent\textbf{Camera Trajectory Generation.}
Early approaches rely on optimization-based planning~\cite{blinn2002looking, lino2015intuitive, galvane2015camera, liu2024splatraj} or learning-based control~\cite{bonatti2020autonomous, drucker1992cinema, huang2019learning, jiang2020example}.
Recent methods integrate camera movement with scene dynamics: CCD~\cite{jiang2024cinematographic} employs a diffusion model with text and keyframe conditioning in character-centric coordinates; E.T.~\cite{courant2024exceptional} incorporates character trajectories in global coordinates; Director3D~\cite{li2024director3d} adopts a DiT-based framework on multi-view data; Hou et al.~\cite{hou2024learning} predict next-frame motion autoregressively for aerial videography; GenDoP~\cite{zhang2025gendop} combines text with RGBD inputs in an autoregressive framework.
These methods use different task-specific inputs and trajectory representations. OmniCam studies panoramic point-cloud conditioning and explicit target anchoring, together with the effect of absolute and relative pose serialization in an autoregressive generator.

\begin{figure*}[tp]
    \centering
    \includegraphics[width=1.0\linewidth]{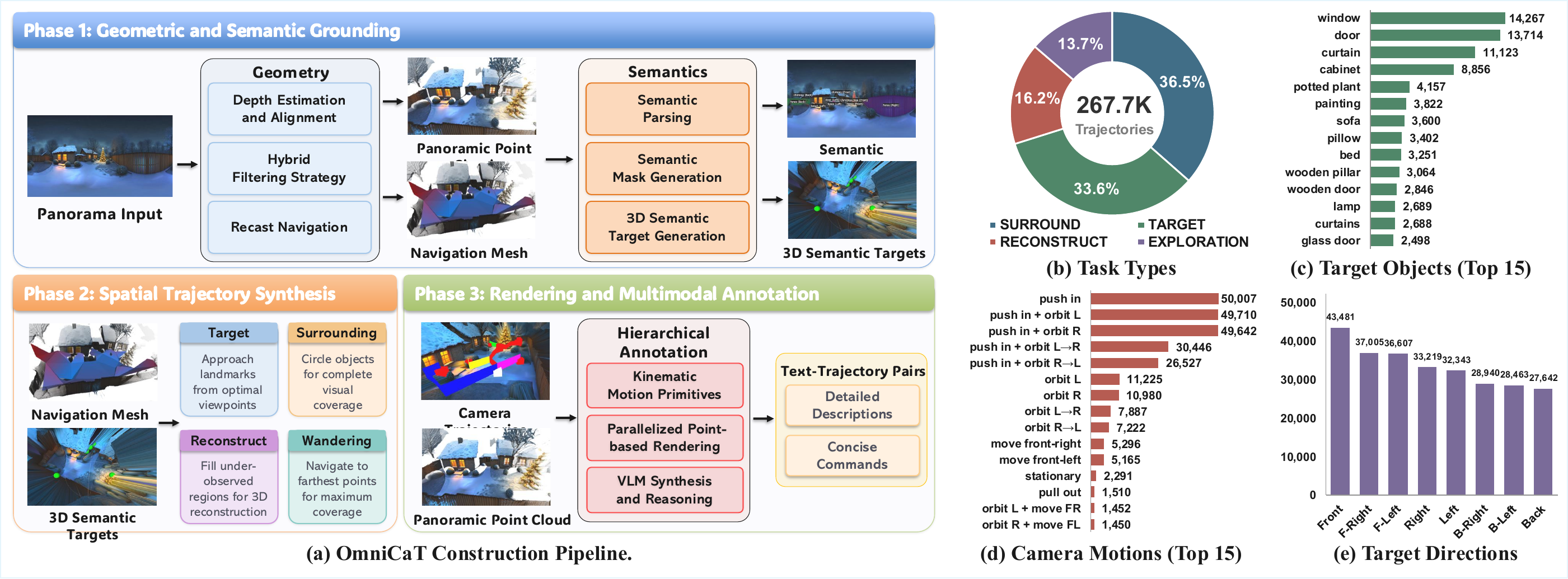}
    \caption{\textbf{OmniCaT Dataset Overview.} \emph{Left}: (a) Construction pipeline comprising (1) Geometric and Semantic Grounding to extract nav-meshes and 3D targets, (2) Spatial Trajectory Synthesis to generate diverse camera behaviors, and (3) Rendering and Multimodal Annotation to produce text-trajectory pairs. \emph{Right}: Dataset statistics showing (b) task distribution, (c) top object categories, (d) motion primitive diversity, and (e) target direction distribution.}
    \label{fig:omnicat}
\end{figure*}

\noindent\textbf{Camera Trajectory Datasets.}
Existing trajectory datasets remain limited in spatial diversity. MVImgNet~\cite{yu2023mvimgnet}, RealEstate10K~\cite{zhou2018stereo}, and DL3DV-10K~\cite{ling2024dl3dv} provide calibrated trajectories but focus on simple motions around stationary objects or within static environments. CCD~\cite{jiang2024cinematographic} and E.T.~\cite{courant2024exceptional} offer character-centric tracking data that does not generalize to unconstrained spatial navigation. DataDoP~\cite{zhang2025gendop} pairs trajectories with depth maps and text for cinematic applications, yet relies on perspective-view depth covering only a limited field of view.
None of these resources simultaneously provides omnidirectional scene geometry, diverse spatial behaviors, and hierarchical textual guidance. OmniCaT fills this gap by grounding trajectory generation in panoramic point-cloud geometry and pairing each trajectory with structured language annotations across four distinct navigation behaviors.

\noindent\textbf{3D Visual Grounding.}
ReasonGrounder~\cite{liu2025reasongrounder} and SpatialReasoner~\cite{liu2025neural} combine language-guided reasoning with 3D scene representations to localize referred objects. OmniCam connects this grounding problem to camera motion by conditioning trajectory generation on explicit 3D target anchors.

\noindent\textbf{Pose Representation Learning.}
Rotation representation~\cite{kim2023rotation,kazhdan2003rotation} has a well-studied impact on learning stability. Zhou et al.~\cite{zhou2019continuity} show that quaternions and Euler angles suffer from discontinuities under the topology of $\mathrm{SO}(3)$ and propose continuous 6D representations for single-frame rotation regression.
Subsequent work considers sequential settings such as human motion prediction~\cite{cui2020learning,cao2020long} and robotic manipulation~\cite{kim2024openvla,liu2024rdt}. OmniCam uses sequential quaternion sign alignment to reduce inconsistent discrete labels for equivalent rotations, combined with relative translation. This convention does not provide a globally continuous rotation parameterization or an injective finite quantizer.

\section{The OmniCaT Dataset}
Collecting camera trajectories with controlled behavior and target coverage is costly. \textbf{OmniCaT} synthesizes trajectories on geometry reconstructed from panoramic images, enabling controlled variation of spatial behaviors. Feasibility is assessed against the reconstructed geometry and does not by itself guarantee collision avoidance in the physical scene.

\subsection{Dataset Construction}
As illustrated in \Cref{fig:omnicat}, the pipeline consists of three phases. \emph{(1)~Geometric and semantic grounding}: panoramic depth maps are aligned into a global point cloud $\mathbf{P}^{pan}$~\cite{wang2025moge}, filtered for outliers, and augmented with semantic landmarks and 3D volumetric masks using language-assisted annotation and segmentation models~\cite{carion2025sam3segmentconcepts,liu2024grounding,kim2025zim}; a navigation mesh is then constructed to define the traversable space. \emph{(2)~Spatial trajectory synthesis}: four categories of camera behaviors (Target, Surrounding, Reconstruction, and Wandering) are generated on the navigation mesh, with waypoints and orientations refined via spline and spherical linear interpolation. \emph{(3)~Rendering and multimodal annotation}: a point-based renderer produces video sequences from the planned poses, and a hierarchical annotation scheme derives kinematic motion primitives from camera extrinsics before a vision-language model synthesizes motion-aware textual instructions. Full implementation details, including depth alignment optimization, NavMesh engineering, and trajectory planning heuristics, are provided in \Cref{sub:implementation_details}.

\subsection{Dataset Statistics}
\label{sec:Statistics}
\noindent\textbf{Scale and Annotations.} As summarized in \Cref{fig:omnicat}, OmniCaT comprises {267,700} trajectories across {9,998} scenes ({21.7M} rendered frames), categorized into four spatial behaviors: Surrounding ({97,702}), Target ({90,065}), Reconstruction ({43,316}), and Wandering ({36,617}). Each trajectory is annotated with 3D semantic targets, directional bearings, and hierarchical textual descriptions, enabling joint geometric-semantic conditioning. Full annotation details are provided in \Cref{sub:implementation_details}.

\begin{figure*}[tp]
\centering
\includegraphics[width=1.0\linewidth]{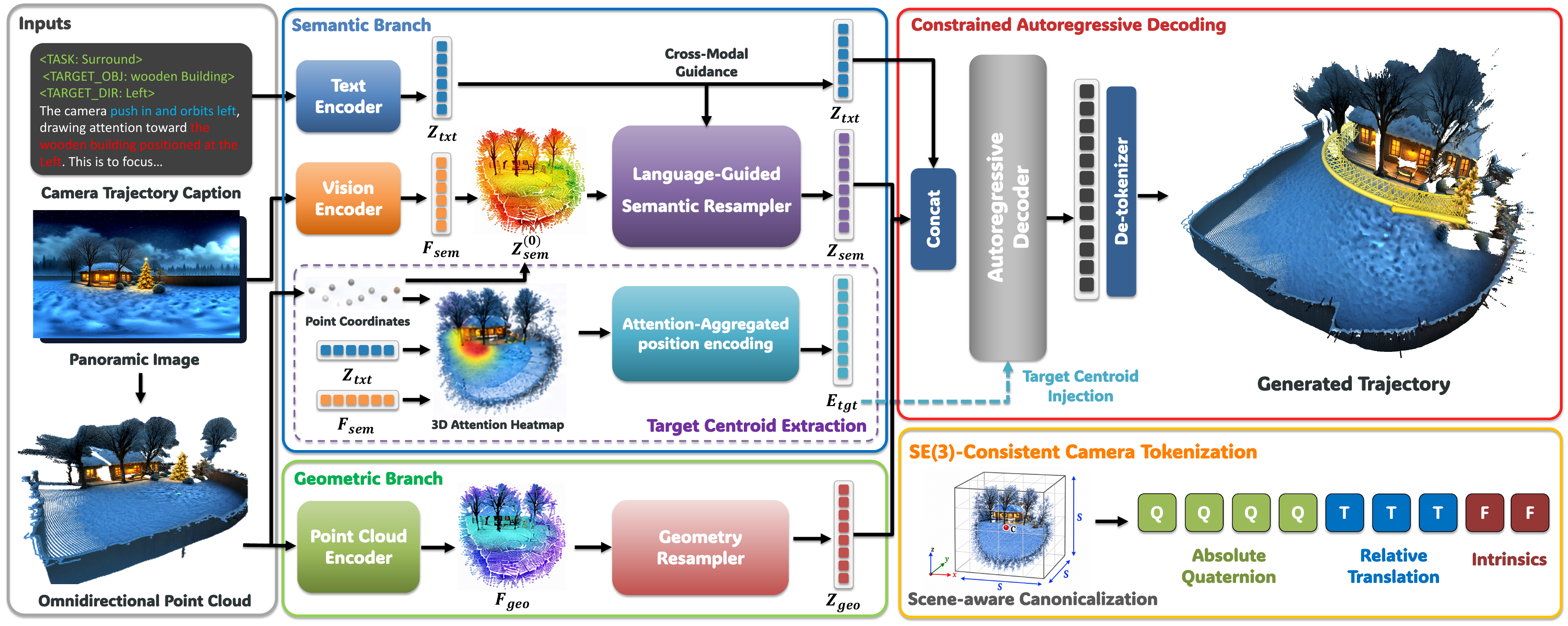}
\caption{\textbf{OmniCam Architecture.} Three parallel branches encode the caption, panoramic image, and point cloud. A Language-Guided Resampler produces target-aware semantic tokens while a Geometry Resampler yields geometric descriptors; an attention-aggregated 3D target centroid provides an explicit spatial anchor for the decoder. The autoregressive decoder generates pose tokens from the concatenated condition, which are mapped to continuous SE(3) trajectories by a de-tokenizer.
Right bottom: SE(3) tokenization with 9 tokens for each pose frame.
}
\label{fig:pipeline}
\end{figure*}

\noindent\textbf{Scope of Synthetic Supervision.}
The planner uses reconstructed geometry, navigation meshes, and semantic targets to generate feasible examples. OmniCam learns to predict trajectories from a panorama and text without constructing a navigation mesh or running graph search at inference. Downstream evaluation tests the usefulness of this learned mapping. A gap to the planner does not establish freedom from planner bias; evaluation on independently measured geometry and independently specified tasks is needed to assess that limitation.

\section{Method}
\label{sec:method}

\subsection{Hybrid Pose Tokenization}
\label{sec:tokenization}
We serialize continuous camera parameters using scene-aware calibration and a hybrid absolute-relative representation. Quaternion sign alignment addresses equivalent positive and negative representations; finite binning still introduces quantization error.

\noindent\textbf{Scene-Aware Calibration.}
Panoramic depth is unprojected into a filtered 3D point cloud. Scene statistics, including a centroid $\mathbf{c}$ and a scale $s$, calibrate camera translations and the initial frame. Normalization reduces variation in coordinate magnitude across scenes but does not eliminate errors in estimated metric scale.

\noindent\textbf{Hybrid Absolute-Relative Pose Encoding.}
Rotations are encoded as absolute unit quaternions with sequential sign alignment: each frame is aligned with the preceding aligned representative. The deterministic convention and its assumptions are stated in Proposition~\ref{prop:hemisphere}; implementation-specific tie handling must agree with that convention. Translations after the initial frame use relative increments, reducing the per-step dynamic range while allowing quantization residuals to accumulate.
These components, together with normalized intrinsics, are quantized via uniform binning into per-frame token groups of 9 tokens: 4 rotation, 3 translation, 2 intrinsics (\Cref{fig:pipeline}, right bottom). The token sequence $\mathbf{S}$ is augmented with boundary tokens (\textbf{BOS}, \textbf{EOS}, \textbf{PAD}) and projected into a latent space through a learnable codebook.

\noindent\textbf{Theoretical Motivation.}
The hybrid scheme balances different reconstruction-error tradeoffs (\Cref{sec:theory_appendix}). If each decoded rotation has angular quantization error at most $\epsilon$, absolute encoding retains this per-frame bound, whereas integrating relative rotations permits a worst-case bound of $\min\{\pi,(T-1)\epsilon\}$ when the initial rotation is exact (Proposition~\ref{prop:rotation_error}). The value of $\epsilon$ depends on the component quantizer and decoder normalization. Relative translations can use a smaller displacement range and hence finer local resolution, but their errors can also accumulate. Proposition~\ref{prop:hemisphere} shows that a specified initial sign rule followed by alignment to the already aligned preceding quaternion removes dependence on the arbitrary quaternion sign when adjacent rotations differ by less than $\pi$. This resolves double-cover ambiguity in the supervision; finite quantization remains lossy and many-to-one.

\subsection{Decoupled Dual-Branch Conditioning}
\label{sec:encoding}
OmniCam encodes estimated panoramic geometry and language-referenced appearance in separate pathways. The point cloud supplies omnidirectional, partial scene structure; the semantic pathway supplies target-related features. This architecture separates input routing, without assuming that the learned representations are statistically independent.

\noindent\textbf{Semantic Branch (Text \& Vision Encoders).} The semantic branch jointly processes the trajectory caption and the panoramic image to produce target-aware features. A pretrained text encoder maps the caption $T_{txt}$ into textual features $\mathbf{Z}_{txt} \in \mathbb{R}^{N_{txt} \times C_{txt}}$, which also serve as cross-modal guidance for the vision pathway. In parallel, a vision encoder produces a visual feature map $\mathbf{F}_{sem} \in \mathbb{R}^{H \times W \times C_{sem}}$ from $\mathbf{I}_{pano}$. Text and the target descriptor $T_{obj}$ guide the subsequent semantic resampling stage. Structured caption dropout varies instruction composition during training (\Cref{sec:caption_dropout}).

\noindent\textbf{Geometric Branch (Point Cloud Encoder).} The geometric branch receives no direct text input. At inference time, a monocular depth estimator~\cite{wang2025moge} reconstructs panoramic depth from the input RGB panorama; the estimated depth is then unprojected into a 3D point cloud, voxelized, and processed by a sparse-convolution encoder combined with a Transformer encoder to yield geometric descriptors $\mathbf{F}_{geo} \in \mathbb{R}^{M \times C_{geo}}$. Its features provide structural context for trajectory generation. Architectural details for both branches are provided in \Cref{sec:encoder_details}.

\subsection{Spatial Grounding via Resampling and Target Centroid Injection}
\label{sec:resampling}

This section introduces two parallel resamplers that compress heterogeneous encoder outputs (text, vision, and point cloud features) into compact prefix tokens, and an attention-aggregated 3D centroid that serves as an explicit spatial anchor.

\noindent\textbf{Language-Guided Semantic Resampler.} Given the visual features from the vision encoder, we first project them into the 3D point-cloud space to obtain 3D semantic features $\mathbf{Z}_{sem}^{(0)}$.
The semantic resampler combines self-attention, spatial attention, and textual attention. The interaction is summarized schematically at layer $l$:
\begin{equation}
\begin{aligned}
\mathbf{Z}_{sem}^{(l)}={}&
\mathrm{SelfAttn}\bigl(\underbrace{\mathbf{Z}_{sem}^{(l-1)}}_{\mathrm{Q,K,V}}\bigr)\\
&+\mathrm{CrossAttn}\bigl(\underbrace{\mathbf{Z}_{sem}^{(l-1)}}_{\mathrm{Q}},\underbrace{[\mathbf{P};\mathbf{F}_{sem}]}_{\mathrm{K,V}}\bigr)\\
&+\mathrm{CrossAttn}\bigl(\underbrace{\mathbf{Z}_{sem}^{(l-1)}}_{\mathrm{Q}},\underbrace{\mathbf{Z}_{txt}}_{\mathrm{K,V}}\bigr).
\end{aligned}
\label{eq:attn_resampler}
\end{equation}
where 3D semantic features serve as the common query; self-attention captures inter-query dependencies; spatial cross-attention grounds queries in physical scene geometry aggregated by 3D point coordinates $\mathbf{P}$ concatenated with visual features $\mathbf{F}_{sem}$, and text-guided cross-attention aggregates textual guidance from $\mathbf{Z}_{txt}$.
The resulting $\mathbf{Z}_{sem} \in \mathbb{R}^{M_{sem} \times C}$ serves as the semantic latent code.

\noindent\textbf{Geometry Resampler.}
A text-agnostic geometry resampler compresses dense per-point descriptors $\mathbf{F}_{geo}$ into a fixed-length latent sequence $\mathbf{Z}_{geo} \in \mathbb{R}^{M_{geo} \times C}$ through a stack of Transformer decoder layers, providing a geometric condition without a direct text input.

\noindent\textbf{Target Centroid Injection.}
A target centroid $\mathbf{c}_{tgt}$ supplies an explicit spatial anchor. Let $a_j$ denote the target-attention logit associated with point $\mathbf{p}_j$ and let $\tau>0$ be the temperature. The described soft-argmax aggregation has the form
\begin{equation}
 \alpha_j=\frac{\exp(a_j/\tau)}{\sum_{k=1}^{N}\exp(a_k/\tau)},\qquad
 \mathbf{c}_{tgt}=\sum_{j=1}^{N}\alpha_j\mathbf{p}_j.
 \label{eq:target_centroid}
\end{equation}
This requires point-indexed weights; attention over text tokens alone does not directly define them.
The centroid is positionally encoded into an auxiliary target memory $\mathbf{E}_{tgt}\in\mathbb{R}^{M_{tgt}\times C}$. When caption dropout removes the target object tag, these tokens are masked.

The final multimodal condition $\mathbf{Z}$ is formed by concatenating the textual features with the resampled panoramic latents:
\begin{equation}
\mathbf{Z} = [\mathbf{Z}_{txt};\; \mathbf{Z}_{sem};\; \mathbf{Z}_{geo}].
\label{eq:z_cond}
\end{equation}
The target token bank $\mathbf{E}_{tgt}$ serves as auxiliary memory accessed by the decoder through a gated cross-attention module, as detailed in \Cref{sec:generation}.

\subsection{Format-Constrained Autoregressive Decoding}
\label{sec:generation}
An autoregressive Transformer decoder $\mathcal{D}$ generates discrete trajectory tokens conditioned on the multimodal prefix $\mathbf{Z}$ and the target token bank $\mathbf{E}_{tgt}$.

\noindent\textbf{Decoder Architecture.} The condition $\mathbf{Z}$ is prepended to the input sequence before the \textbf{BOS} token. Each decoder block consists of causal self-attention, gated cross-attention over $\mathbf{E}_{tgt}$, and a feed-forward network. The gate controls the contribution of the target memory. Its effect on target-directed trajectories is evaluated through the grounding ablations; per-behavior gate strength is not inferred from the aggregate results.

\begin{table*}[tp]
\centering
\caption{\textbf{Comparison with representative trajectory generation baselines on OmniCaT and the public DataDoP benchmark.}\label{tab:main}}
\resizebox{\linewidth}{!}{
\begin{tabular}{l|c|cc|cc|c|cc}
\toprule
\textbf{Method} & \textbf{Dataset} & \textbf{ATE $\downarrow$} & \textbf{FDE $\downarrow$} & \textbf{RPE-R $\downarrow$} &\textbf{ RPE-T $\downarrow$} & \textbf{Coll.~(\%) $\downarrow$} & \textbf{Vis.~(\%) $\uparrow$} & \textbf{Angle~($^{\circ}$) $\downarrow$} \\
\midrule
\multicolumn{9}{c}{\emph{Evaluation on the OmniCaT benchmark}} \\
\midrule
CCD~\cite{jiang2024cinematographic}      & Pre-trained & 2.483 &3.396 & 3.112 & 0.082 & 36.9 & 12.3 & 32.5 \\
E.T.~\cite{courant2024exceptional}       & Pre-trained & 1.983 & 2.368 & 2.151 & 0.069 & 46.2 & 21.3 & 40.2 \\
Director3D~\cite{li2024director3d}       & Pre-trained & 1.828 & 2.942 & 2.309 & 0.052 & 28.4 & 28.1 & 27.2 \\
Director3D       & OmniCaT & 1.522 & 2.648 & 2.187 & 0.049 & 27.6 & 29.3 & 26.5 \\
GenDoP~\cite{zhang2025gendop}            & Pre-trained & 1.865 & 2.951 & 1.681 & 0.044 & 32.6 & 31.5 & 29.1 \\
GenDoP            & OmniCaT & 1.624 & 2.483 & 1.577 & 0.038 & 30.4 & 33.6 & 27.6 \\
\midrule
\cellcolor[HTML]{efefef}\textbf{OmniCam (Ours)}                  & \cellcolor[HTML]{efefef}\textbf{OmniCaT}   & \cellcolor[HTML]{efefef}\textbf{0.868} & \cellcolor[HTML]{efefef}\textbf{1.462} & \cellcolor[HTML]{efefef}\textbf{1.135} & \cellcolor[HTML]{efefef}\textbf{0.027} & \cellcolor[HTML]{efefef}\textbf{10.4} & \cellcolor[HTML]{efefef}\textbf{52.0} & \cellcolor[HTML]{efefef}\textbf{10.9} \\
\midrule
\multicolumn{9}{c}{\emph{Evaluation on the public DataDoP benchmark~\cite{zhang2025gendop}}} \\
\midrule
CCD~\cite{jiang2024cinematographic}      & Pre-trained & 2.697 &3.338 & 3.218 & 0.093 & 44.2 & 18.7 & 34.2 \\
E.T.~\cite{courant2024exceptional}       & Pre-trained & 2.568 & 3.142 & 3.086 & 0.084 & 45.9 & 22.5 & 33.8 \\
Director3D~\cite{li2024director3d}       & Pre-trained & 1.944 & 2.887 & 2.968 & 0.077 & 39.6 & 30.8 & 34.3 \\
Director3D       & DataDoP 		  		        & 1.688 & 2.762 & 2.819 & 0.062 & 41.7 & 32.2 & 29.5 \\
GenDoP~\cite{zhang2025gendop}            & DataDoP & 1.437 & 2.255 & 1.642 & 0.058 & 28.5 & 36.6 & 29.8 \\
\midrule
\cellcolor[HTML]{efefef}\textbf{OmniCam (Ours)}                  & \cellcolor[HTML]{efefef}\textbf{OmniCaT}   & \cellcolor[HTML]{efefef}\textbf{0.925} & \cellcolor[HTML]{efefef}\textbf{1.834} & \cellcolor[HTML]{efefef}\textbf{1.118} & \cellcolor[HTML]{efefef}\textbf{0.046} & \cellcolor[HTML]{efefef}\textbf{8.9} & \cellcolor[HTML]{efefef}\textbf{64.2} & \cellcolor[HTML]{efefef}\textbf{12.7} \\
\bottomrule
\end{tabular}
}
\end{table*}

\begin{figure*}[!htbp]
    \centering
    \includegraphics[width=1.0\linewidth]{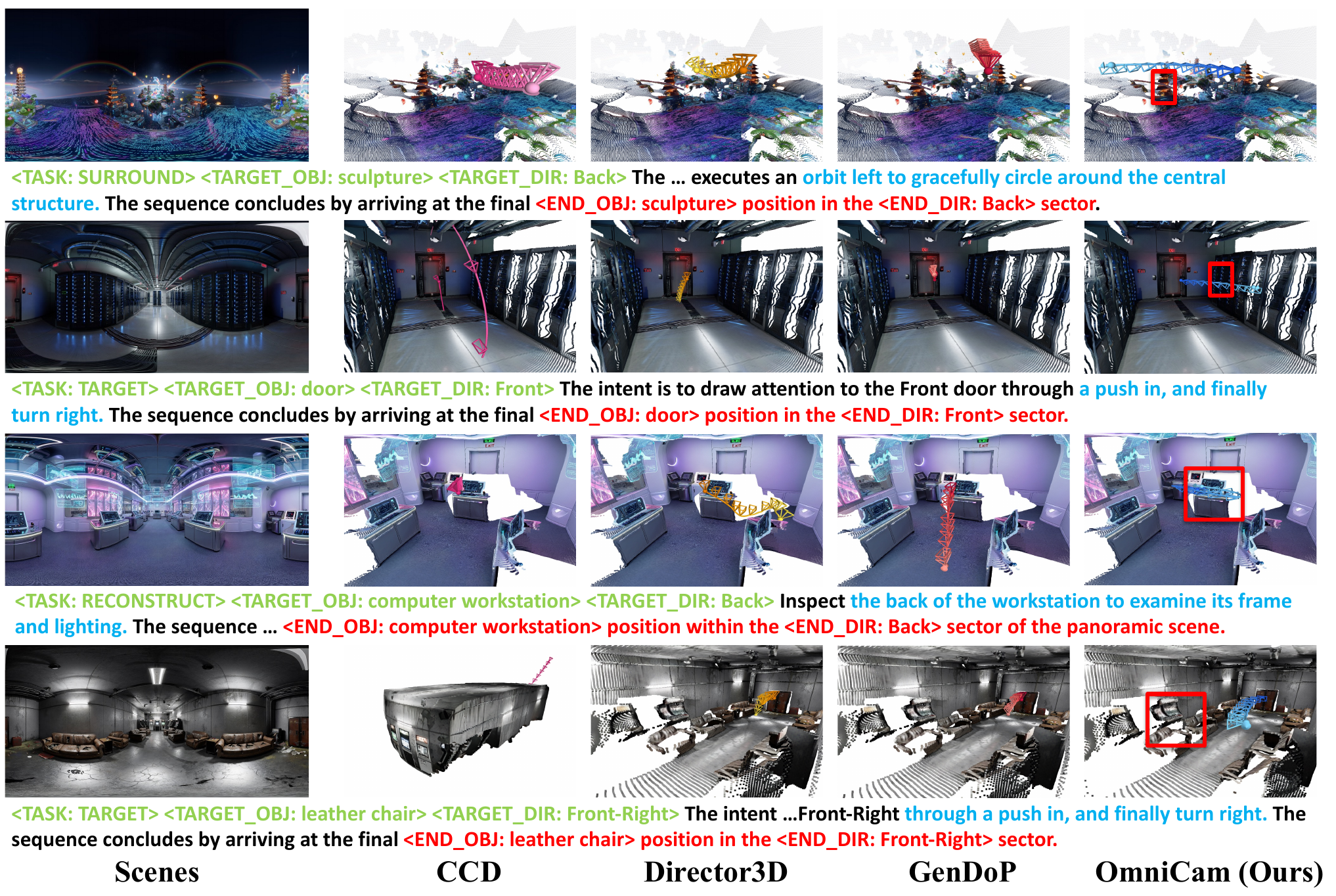}
    \caption{\textbf{Qualitative comparison on representative panoramic scenes.} Each column shows the trajectory produced by a specific method. The target object is indicated by a red bounding box.}
    \label{fig:qualitative}
\end{figure*}

\noindent\textbf{Format-Constrained Decoding.}
During inference, the \textbf{EOS} token is masked within each group of 9 tokens (rotation, translation, and intrinsics), so termination occurs only at a frame boundary. The initial frame is constrained by the calibrated origin. This is a sequence-format constraint: valid rotations require an appropriate quaternion decoding rule, and collision avoidance is assessed empirically rather than guaranteed by token masking. Top-$k$ sampling controls the sampling support.

\noindent\textbf{Training Objective.} The model is optimized with a weighted cross-entropy loss $\mathcal{L}_{ce}$ over pose tokens, where rotation and translation channels receive higher weights to prioritize geometric accuracy. An auxiliary smooth $\ell_1$ localization loss $\mathcal{L}_{loc}$ supervises the estimated centroid $\mathbf{c}_{tgt}$ against the annotated 3D target position: $\mathcal{L} = \mathcal{L}_{ce} + \lambda \cdot \mathcal{L}_{loc}$.

\section{Experiments}
\label{sec:experiments}

\begin{figure*}[t]
    \centering
    \includegraphics[width=1.0\linewidth]{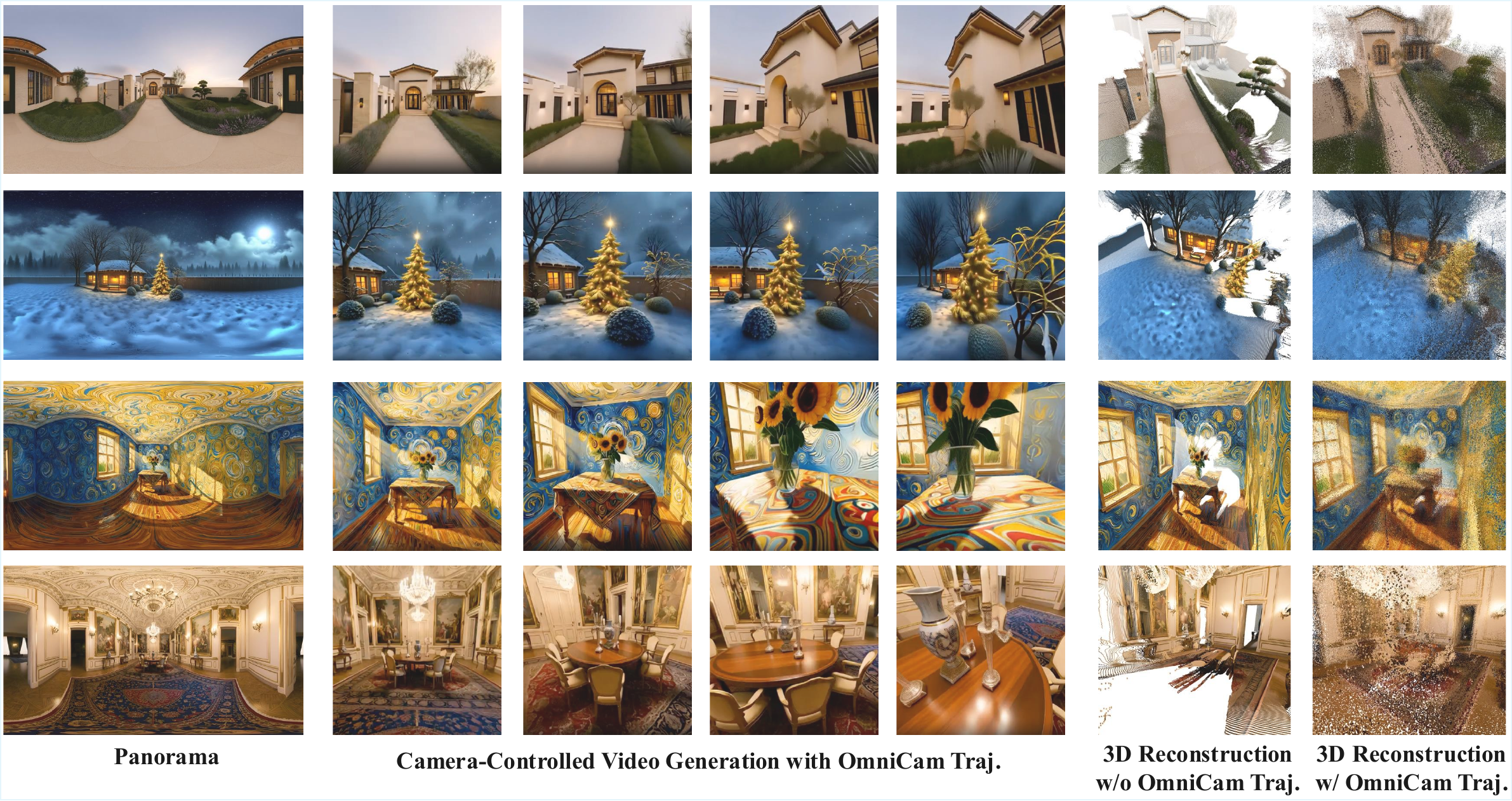}
    \caption{\textbf{Qualitative comparison on panoramic scenes.} Each row displays an input panorama, generated video sequences, and 3D reconstruction results. The two rightmost columns compare reconstruction outputs without and with the OmniCam trajectory; no quantitative reconstruction metric is reported here.}
    \label{fig:qualitative_recon}
\end{figure*}

\begin{figure*}[t]
\centering
\includegraphics[width=\linewidth]{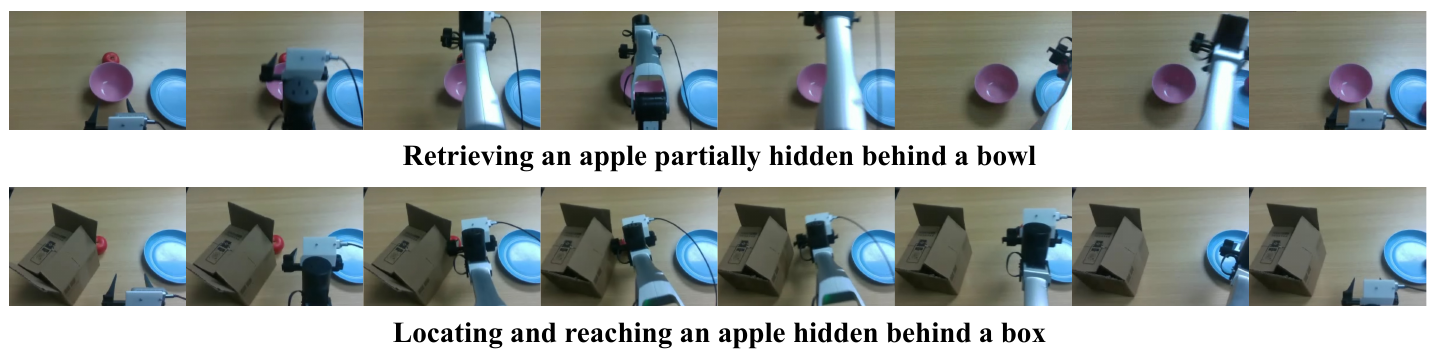}
\caption{\textbf{Real-robot demonstration of OmniCam-guided active perception.} OmniCam plans a camera trajectory that actively repositions the observation camera to resolve target occlusion. }
\label{fig:real_robot}
\end{figure*}

\subsection{Experimental Setup}
\noindent\textbf{Datasets.} OmniCaT experiments are reported on disjoint training and evaluation splits. Exact scene-level counts and split manifests require verification. DataDoP~\cite{zhang2025gendop} results are reported for OmniCam trained on OmniCaT without DataDoP fine-tuning. The construction of OmniCam inputs and the evaluation geometry on DataDoP must be specified before interpreting these as a controlled zero-shot comparison.

\noindent\textbf{Baselines.} Director3D~\cite{li2024director3d} and GenDoP~\cite{zhang2025gendop} are retrained on OmniCaT. CCD~\cite{jiang2024cinematographic} and E.T.~\cite{courant2024exceptional} use released checkpoints as out-of-domain references. Their character-centric conditioning differs from the available OmniCaT inputs, so these rows do not isolate model quality under matched conditioning.

\noindent\textbf{Metrics.} We report trajectory errors (ATE, FDE, and RPE-R/T, using the \textsc{evo} toolkit~\cite{grupp2017evo} where applicable), collision against reconstructed navigation meshes, target visibility, and angular deviation. These reference-trajectory errors measure agreement with the generated supervision; multiple distinct trajectories can satisfy the same instruction.

\noindent\textbf{Implementation.} OmniCam uses an OPT-style~\cite{zhang2022opt} decoder with FlashAttention-2~\cite{dao2023flashattention}, trained in bf16 on 8$\times$H20 GPUs for $\sim$100 epochs. The base configuration (hidden dim 1024, 8 heads, 12 layers; ${\sim}$500M trainable parameters) is used for the main comparisons; a larger variant (${\sim}$1B) is analyzed in \Cref{sec:model_scaling}. Hyperparameters are provided in \Cref{sec:impl_details}.

\begin{table*}[tp]
\centering
\small
\caption{\textbf{Ablation studies on OmniCaT.} The dual-branch panoramic encoder, target injection components, and pose tokenization schemes are investigated.}
\label{tab:merged_ablation}
\resizebox{\linewidth}{!}{
\begin{tabular}{ll|cc|cc|c|cc}
\toprule
& \textbf{Variant} & \textbf{ATE $\downarrow$} & \textbf{FDE $\downarrow$} & \textbf{RPE-R $\downarrow$} & \textbf{RPE-T $\downarrow$} & \textbf{Coll.~(\%) $\downarrow$} & \textbf{Vis.~(\%) $\uparrow$} & \textbf{Angle~($^{\circ}$) $\downarrow$} \\
\midrule
\multirow{2}{*}{\textit{Encoder}} & w/o semantic branch     & 2.186 & 3.247 & 2.694 & 0.072 & 13.6 & 11.5 & 64.7 \\
& w/o geometric branch    & 1.774 & 3.084 & 2.881 & 0.049 & 45.8 & 33.6 & 19.7 \\
\midrule
\multirow{2}{*}{\textit{Geo.\ Input}} & Panorama RGB only           & 1.536 & 2.687 & 2.143 & 0.044 & 38.7 & 36.2 & 18.4 \\
& Panorama RGB-D              & 1.247 & 2.118 & 1.685 & 0.036 & 22.6 & 42.8 & 14.7 \\
\midrule
\multirow{3}{*}{\textit{Grounding}} & w/o target cross-attention          & 1.673 & 2.259 & 2.774 & 0.042 & 15.2 & 25.8 & 35.6 \\
& w/o target localization loss        & 1.582 & 2.148 & 2.661  & 0.038 & 14.8 & 32.4 & 28.7 \\
& w/o target cross-attn. \& loss      & 1.924 & 2.576 & 2.903 & 0.051 & 18.9 & 21.3 & 40.2 \\
\midrule
\multirow{3}{*}{\textit{Tokenization}} & All absolute (rot. + trans.)            & 1.278 & 1.886 & 1.625 & 0.058 & 14.3 & 41.6 & 15.3 \\
& All relative (rot. + trans.)          &1.449 & 1.927 & 1.984 & 0.062 & 16.6 & 43.2 & 16.4 \\
& Hybrid w/o hemisphere consistency   & 1.094 & 1.756 & 1.847 & 0.033 & 12.2 & 46.1 & 13.8 \\
\midrule
\cellcolor[HTML]{efefef}\textbf{\textit{Full}} & \cellcolor[HTML]{efefef}\textbf{OmniCam}      & \cellcolor[HTML]{efefef}\textbf{0.868} & \cellcolor[HTML]{efefef}\textbf{1.462} & \cellcolor[HTML]{efefef}\textbf{1.135} & \cellcolor[HTML]{efefef}\textbf{0.027} & \cellcolor[HTML]{efefef}\textbf{10.4} & \cellcolor[HTML]{efefef}\textbf{52.0} & \cellcolor[HTML]{efefef}\textbf{10.9} \\
\bottomrule
\end{tabular}
}
\end{table*}

\subsection{Main Results}
\label{sec:main_results}

\noindent\textbf{Quantitative Comparison.} On OmniCaT (\Cref{tab:main}), OmniCam reduces ATE, FDE, RPE-R, and RPE-T by 46.6\%, 41.1\%, 28.0\%, and 28.9\% relative to retrained GenDoP, respectively. Collision decreases from 30.4\% to 10.4\% (65.8\% relative reduction), and visibility rises from 33.6\% to 52.0\% (54.8\% relative increase). Director3D is the best baseline for ATE and collision; relative to it, the reductions are 43.0\% and 62.3\%. The scene-bootstrap intervals for OmniCam and GenDoP are reported in \Cref{sec:confidence}; component-specific effects are examined separately below.

\noindent\textbf{Zero-Shot Cross-Dataset Transfer.}
The reported DataDoP results favor OmniCam over GenDoP trained on DataDoP: ATE 0.925 vs.\ 1.437, collision 8.9\% vs.\ 28.5\%, visibility 64.2\% vs.\ 36.6\%, and angle 12.7$^{\circ}$ vs.\ 29.8$^{\circ}$. OmniCam is reported to use no DataDoP fine-tuning. Establishing the extent of transfer requires the input adaptation, data-overlap checks, and metric protocol identified above; these aggregate values alone do not rule out planner bias or differences in available scene information.

\noindent\textbf{Comparison with the Heuristic Planner.}
With access to reconstructed geometry, navigation meshes, and semantic annotations, the planner has 5.2\% collision and 58.7\% visibility, compared with OmniCam's 10.4\% and 52.0\%. This is a privileged-information reference, not a mathematical performance bound. The reported runtimes are 0.84s and approximately 16.5s (\Cref{sec:planner_oracle}); matched timing conditions require verification.

\noindent\textbf{Per-Behavior Analysis.} \Cref{sec:per_behavior} reports lower ATE and FDE for OmniCam in all four behavior categories. These results describe coverage of the evaluated behaviors without identifying which component causes each gain.

\noindent\textbf{Qualitative Comparison.} \Cref{fig:qualitative} compares trajectories across representative scenes. OmniCam trajectories remain aligned with navigable regions, follow the intended viewpoint of the referenced object, and exhibit smoother transitions in both position and orientation compared to all baselines. Additional qualitative examples are provided in \Cref{sec:more_qualitative}.

\subsection{Ablation Studies}
\label{sec:ablation}

\noindent\textbf{Dual-Branch Encoder.} Removing the semantic branch reduces visibility from 52.0\% to 11.5\%, while removing the geometric branch increases collision from 10.4\% to 45.8\%. This supports the usefulness of both pathways. Because branch removal also changes capacity and available information, a capacity-matched single-branch comparison is needed to isolate the effect of routing.

\noindent\textbf{Geometric Input Modality.} RGB-only, RGB-D, and point-cloud variants yield collision rates of 38.7\%, 22.6\%, and 10.4\%, respectively. The results favor explicit 3D processing in these configurations. Since the point cloud is unprojected from depth, this comparison concerns representation and processing, rather than additional geometric information created by unprojection.

\noindent\textbf{Target Centroid Injection.} Removing target cross-attention changes visibility from 52.0\% to 25.8\% and angle from 10.9$^{\circ}$ to 35.6$^{\circ}$. Removing the localization loss yields 32.4\% and 28.7$^{\circ}$; removing both yields 21.3\% and 40.2$^{\circ}$, the worst values among these grounding variants. The comparison supports explicit target anchoring in the tested architecture (\Cref{sec:attn_vis}).

\noindent\textbf{Hybrid Pose Encoding.} Hybrid encoding obtains RPE-R 1.135 and RPE-T 0.027, compared with 1.625/0.058 for all-absolute and 1.984/0.062 for all-relative encoding. Removing sign alignment increases RPE-R to 1.847. The representation analysis in \Cref{sec:theory_appendix} concerns quantization and sign selection; it does not guarantee the accuracy of autoregressive predictions.

\subsection{Downstream Application Validation}
\label{sec:downstream}

With WorldStereo~\cite{zhang2026worldstereo}, OmniCam trajectories improve the reported CLIP score by 14.1\% and consecutive-frame SSIM by 15.4\% relative to GenDoP. \Cref{fig:qualitative_recon} illustrates video and reconstruction outputs.

Embodied policies combine perception with action generation, as in TriVLA~\cite{liu2025trivla} and spatiotemporal diffusion policies~\cite{liu2025spatial}, while ActiveVLA~\cite{liu2026activevla} explicitly selects task-relevant views for manipulation. OmniCam addresses the complementary problem of generating camera trajectories for acquiring such observations. Robotic active-perception results and a qualitative real-robot demonstration are presented in \Cref{sec:downstream_appendix,fig:real_robot}. The robot success-rate aggregation needs verification; the demonstration is evidence of feasibility rather than a systematic sim-to-real evaluation.

\section{Conclusion}
OmniCam generates camera trajectories from panoramic imagery and language using point-cloud conditioning, hybrid pose serialization, and explicit 3D target anchoring. OmniCaT supplies 267,700 examples across four camera behaviors. The reported comparisons and ablations support these design choices within the evaluated settings, and downstream studies examine their practical use. The representation analysis distinguishes quaternion sign consistency from lossy quantization and identifies the error-accumulation tradeoff of relative encoding. Remaining limitations include estimated and incomplete geometry, planner-derived supervision, static-scene observations, and the need for independently verified evaluation protocols. Future work includes dynamic scenes and closed-loop planning.

\FloatBarrier
{
\small
\bibliographystyle{ieeenat_fullname}
\bibliography{reference}

@String(ICCV  = {Int. Conf. Comput. Vis.})

@String(TOG   = {ACM Trans. Graph.})

@String(ICCV  = {ICCV})

@String(TOG   = {ACM TOG})

@inproceedings{yu2023mvimgnet,
  title={{MVImgNet}: A large-scale dataset of multi-view images},
  author={Yu, Xianggang and Xu, Mutian and Zhang, Yidan and Liu, Haolin and Ye, Chongjie and Wu, Yushuang and Yan, Zizheng and Zhu, Chenming and Xiong, Zhangyang and Liang, Tianyou and others},
  booktitle={Proceedings of the IEEE/CVF conference on computer vision and pattern recognition},
  pages={9150--9161},
  year={2023}
}

@article{zhou2018stereo,
  title={Stereo magnification: Learning view synthesis using multiplane images},
  author={Zhou, Tinghui and Tucker, Richard and Flynn, John and Fyffe, Graham and Snavely, Noah},
  journal={arXiv preprint arXiv:1805.09817},
  year={2018}
}

@article{he2024cameractrl,
  title={{CameraCtrl}: Enabling camera control for text-to-video generation},
  author={He, Hao and Xu, Yinghao and Guo, Yuwei and Wetzstein, Gordon and Dai, Bo and Li, Hongsheng and Yang, Ceyuan},
  journal={arXiv preprint arXiv:2404.02101},
  year={2024}
}

@article{intelligence2025pi,
  title={$\pi_{0.5}$: a Vision-Language-Action Model with Open-World Generalization},
  author={Intelligence, Physical and Black, Kevin and Brown, Noah and Darpinian, James and Dhabalia, Karan and Driess, Danny and Esmail, Adnan and Equi, Michael and Finn, Chelsea and Fusai, Niccolo and others},
  journal={arXiv preprint arXiv:2504.16054},
  year={2025}
}

@article{kim2024openvla,
  title={{OpenVLA}: An open-source vision-language-action model},
  author={Kim, Moo Jin and Pertsch, Karl and Karamcheti, Siddharth and Xiao, Ted and Balakrishna, Ashwin and Nair, Suraj and Rafailov, Rafael and Foster, Ethan and Lam, Grace and Sanketi, Pannag and others},
  journal={arXiv preprint arXiv:2406.09246},
  year={2024}
}

@inproceedings{ling2024dl3dv,
  title={{DL3DV-10K}: A large-scale scene dataset for deep learning-based {3D} vision},
  author={Ling, Lu and Sheng, Yichen and Tu, Zhi and Zhao, Wentian and Xin, Cheng and Wan, Kun and Yu, Lantao and Guo, Qianyu and Yu, Zixun and Lu, Yawen and others},
  booktitle={Proceedings of the IEEE/CVF Conference on Computer Vision and Pattern Recognition},
  pages={22160--22169},
  year={2024}
}

@article{jiang2024cinematographic,
  title={Cinematographic camera diffusion model},
  author={Jiang, Hongda and Wang, Xi and Christie, Marc and Liu, Libin and Chen, Baoquan},
  journal={Computer Graphics Forum},
  volume={43},
  number={2},
  pages={e15055},
  year={2024},
  doi={10.1111/cgf.15055},
  publisher={Wiley}
}

@article{alonso2024diffusion,
  title={Diffusion for world modeling: Visual details matter in {Atari}},
  author={Alonso, Eloi and Jelley, Adam and Micheli, Vincent and Kanervisto, Anssi and Storkey, Amos and Pearce, Tim and Fleuret, Fran{\c{c}}ois},
  journal={Advances in Neural Information Processing Systems},
  volume={37},
  pages={58757--58791},
  year={2024}
}

@inproceedings{courant2024exceptional,
  title={{ET} the Exceptional Trajectories: Text-to-camera-trajectory generation with character awareness},
  author={Courant, Robin and Dufour, Nicolas and Wang, Xi and Christie, Marc and Kalogeiton, Vicky},
  booktitle={European Conference on Computer Vision},
  pages={464--480},
  year={2024},
  organization={Springer}
}

@article{li2024director3d,
  title={{Director3D}: Real-world camera trajectory and {3D} scene generation from text},
  author={Li, Xinyang and Lai, Zhangyu and Xu, Linning and Qu, Yansong and Cao, Liujuan and Zhang, Shengchuan and Dai, Bo and Ji, Rongrong},
  journal={Advances in neural information processing systems},
  volume={37},
  pages={75125--75151},
  year={2024}
}

@inproceedings{zhang2025gendop,
  title={{GenDoP}: Auto-regressive camera trajectory generation as a director of photography},
  author={Zhang, Mengchen and Wu, Tong and Tan, Jing and Liu, Ziwei and Wetzstein, Gordon and Lin, Dahua},
  booktitle={Proceedings of the IEEE/CVF International Conference on Computer Vision},
  pages={18229--18239},
  year={2025}
}

@article{blinn2002looking,
  title={Where am I? What am I looking at?(cinematography)},
  author={Blinn, Jim},
  journal={IEEE Computer Graphics and Applications},
  volume={8},
  number={4},
  pages={76--81},
  year={2002},
  publisher={IEEE}
}

@article{putz20163d,
  title={{3D} navigation mesh generation for path planning in uneven terrain},
  author={P{\"u}tz, Sebastian and Wiemann, Thomas and Sprickerhof, Jochen and Hertzberg, Joachim},
  journal={Ifac-Papersonline},
  volume={49},
  number={15},
  pages={212--217},
  year={2016},
  publisher={Elsevier}
}

@misc{grupp2017evo,
  title={{evo}: Python package for the evaluation of odometry and {SLAM}},
  author={Grupp, Michael},
  howpublished={\url{https://github.com/MichaelGrupp/evo}},
  year={2017}
}

@inproceedings{ruetz2019ovpc,
  title={{OVPC} mesh: {3D} free-space representation for local ground vehicle navigation},
  author={Ruetz, Fabio and Hern{\'a}ndez, Emili and Pfeiffer, Mark and Oleynikova, Helen and Cox, Mark and Lowe, Thomas and Borges, Paulo},
  booktitle={2019 International Conference on Robotics and Automation (ICRA)},
  pages={8648--8654},
  year={2019},
  organization={IEEE}
}

@article{lino2015intuitive,
  title={Intuitive and efficient camera control with the toric space},
  author={Lino, Christophe and Christie, Marc},
  journal={ACM Transactions on Graphics (TOG)},
  volume={34},
  number={4},
  pages={1--12},
  year={2015},
  publisher={ACM New York, NY, USA}
}

@article{zhang2022opt,
  title={{OPT}: Open pre-trained transformer language models},
  author={Zhang, Susan and Roller, Stephen and Goyal, Naman and Artetxe, Mikel and Chen, Moya and Chen, Shuohui and Dewan, Christopher and Diab, Mona and Li, Xian and Lin, Xi Victoria and others},
  journal={arXiv preprint arXiv:2205.01068},
  year={2022}
}

@article{dao2023flashattention,
  title={{FlashAttention-2}: Faster attention with better parallelism and work partitioning},
  author={Dao, Tri},
  journal={arXiv preprint arXiv:2307.08691},
  year={2023}
}

@inproceedings{galvane2015camera,
  title={Camera-on-rails: automated computation of constrained camera paths},
  author={Galvane, Quentin and Christie, Marc and Lino, Chrsitophe and Ronfard, R{\'e}mi},
  booktitle={Proceedings of the 8th ACM SIGGRAPH Conference on Motion in Games},
  pages={151--157},
  year={2015}
}

@article{liu2024splatraj,
  title={{SplaTraj}: Camera trajectory generation with semantic gaussian splatting},
  author={Liu, Xinyi and Zhang, Tianyi and Johnson-Roberson, Matthew and Zhi, Weiming},
  journal={arXiv preprint arXiv:2410.06014},
  year={2024}
}

@article{bonatti2020autonomous,
  title={Autonomous aerial cinematography in unstructured environments with learned artistic decision-making},
  author={Bonatti, Rogerio and Wang, Wenshan and Ho, Cherie and Ahuja, Aayush and Gschwindt, Mirko and Camci, Efe and Kayacan, Erdal and Choudhury, Sanjiban and Scherer, Sebastian},
  journal={Journal of Field Robotics},
  volume={37},
  number={4},
  pages={606--641},
  year={2020},
  publisher={Wiley Online Library}
}

@article{wan2025wan,
  title={{Wan}: Open and advanced large-scale video generative models},
  author={Wan, Team and Wang, Ang and Ai, Baole and Wen, Bin and Mao, Chaojie and Xie, Chen-Wei and Chen, Di and Yu, Feiwu and Zhao, Haiming and Yang, Jianxiao and others},
  journal={arXiv preprint arXiv:2503.20314},
  year={2025}
}

@article{kim2023rotation,
  title={Rotation representations and their conversions},
  author={Kim, Soohwan and Kim, Minkyoung},
  journal={Ieee Access},
  volume={11},
  pages={6682--6699},
  year={2023},
  publisher={IEEE}
}

@inproceedings{kazhdan2003rotation,
  title={Rotation invariant spherical harmonic representation of 3 d shape descriptors},
  author={Kazhdan, Michael and Funkhouser, Thomas and Rusinkiewicz, Szymon},
  booktitle={Symposium on geometry processing},
  volume={6},
  pages={156--164},
  year={2003}
}

@inproceedings{zhou2019continuity,
  title={On the continuity of rotation representations in neural networks},
  author={Zhou, Yi and Barnes, Connelly and Lu, Jingwan and Yang, Jimei and Li, Hao},
  booktitle={Proceedings of the IEEE/CVF conference on computer vision and pattern recognition},
  pages={5745--5753},
  year={2019}
}

@inproceedings{drucker1992cinema,
  title={Cinema: A system for procedural camera movements},
  author={Drucker, Steven M and Galyean, Tinsley A and Zeltzer, David},
  booktitle={Proceedings of the 1992 symposium on Interactive 3D graphics},
  pages={67--70},
  year={1992}
}

@inproceedings{huang2019learning,
  title={Learning to film from professional human motion videos},
  author={Huang, Chong and Lin, Chuan-En and Yang, Zhenyu and Kong, Yan and Chen, Peng and Yang, Xin and Cheng, Kwang-Ting},
  booktitle={Proceedings of the IEEE/CVF Conference on Computer Vision and Pattern Recognition},
  pages={4244--4253},
  year={2019}
}

@article{jiang2020example,
  title={Example-driven virtual cinematography by learning camera behaviors.},
  author={Jiang, Hongda and Wang, Bin and Wang, Xi and Christie, Marc and Chen, Baoquan},
  journal={ACM Trans. Graph.},
  volume={39},
  number={4},
  pages={45},
  year={2020}
}

@inproceedings{bar2025navigation,
  title={Navigation world models},
  author={Bar, Amir and Zhou, Gaoyue and Tran, Danny and Darrell, Trevor and LeCun, Yann},
  booktitle={Proceedings of the Computer Vision and Pattern Recognition Conference},
  pages={15791--15801},
  year={2025}
}

@article{hou2024learning,
  title={Learning camera movement control from real-world drone videos},
  author={Hou, Yunzhong and Zheng, Liang and Torr, Philip},
  journal={arXiv preprint arXiv:2412.09620},
  year={2024}
}

@inproceedings{cui2020learning,
  title={Learning dynamic relationships for {3D} human motion prediction},
  author={Cui, Qiongjie and Sun, Huaijiang and Yang, Fei},
  booktitle={Proceedings of the IEEE/CVF conference on computer vision and pattern recognition},
  pages={6519--6527},
  year={2020}
}

@inproceedings{cao2020long,
  title={Long-term human motion prediction with scene context},
  author={Cao, Zhe and Gao, Hang and Mangalam, Karttikeya and Cai, Qi-Zhi and Vo, Minh and Malik, Jitendra},
  booktitle={European Conference on Computer Vision},
  pages={387--404},
  year={2020},
  organization={Springer}
}

@article{wang2025moge,
  title={{MoGe-2}: Accurate monocular geometry with metric scale and sharp details},
  author={Wang, Ruicheng and Xu, Sicheng and Dong, Yue and Deng, Yu and Xiang, Jianfeng and Lv, Zelong and Sun, Guangzhong and Tong, Xin and Yang, Jiaolong},
  journal={arXiv preprint arXiv:2507.02546},
  year={2025}
}

@inproceedings{liu2024grounding,
  title={{Grounding DINO}: Marrying {DINO} with grounded pre-training for open-set object detection},
  author={Liu, Shilong and Zeng, Zhaoyang and Ren, Tianhe and Li, Feng and Zhang, Hao and Yang, Jie and Jiang, Qing and Li, Chunyuan and Yang, Jianwei and Su, Hang and others},
  booktitle={European conference on computer vision},
  pages={38--55},
  year={2024},
  organization={Springer}
}

@article{liu2024rdt,
  title={{RDT-1B}: a diffusion foundation model for bimanual manipulation},
  author={Liu, Songming and Wu, Lingxuan and Li, Bangguo and Tan, Hengkai and Chen, Huayu and Wang, Zhengyi and Xu, Ke and Su, Hang and Zhu, Jun},
  journal={arXiv preprint arXiv:2410.07864},
  year={2024}
}

@inproceedings{kim2025zim,
  title={{ZIM}: Zero-shot image matting for anything},
  author={Kim, Beomyoung and Shin, Chanyong and Jeong, Joonhyun and Jung, Hyungsik and Lee, Se-Yun and Chun, Sewhan and Hwang, Dong-Hyun and Yu, Joonsang},
  booktitle={Proceedings of the IEEE/CVF International Conference on Computer Vision},
  pages={23828--23838},
  year={2025}
}

@misc{carion2025sam3segmentconcepts,
  title={{SAM 3}: Segment anything with concepts},
  author={Carion, Nicolas and Gustafson, Laura and Hu, Yuan-Ting and Debnath, Shoubhik and Hu, Ronghang and Suris, Didac and Ryali, Chaitanya and Alwala, Kalyan Vasudev and Khedr, Haitham and Huang, Andrew and others},
  howpublished={arXiv preprint arXiv:2511.16719},
  eprint={2511.16719},
  archivePrefix={arXiv},
  primaryClass={cs.CV},
  url={https://arxiv.org/abs/2511.16719},
  year={2025}
}

@article{zhang2026worldstereo,
  title={{WorldStereo}: Bridging Camera-Guided Video Generation and Scene Reconstruction via {3D} Geometric Memories},
  author={Zhang, Yisu and Cao, Chenjie and Wang, Tengfei and Zuo, Xuhui and Wu, Junta and Zhu, Jianke and Guo, Chunchao},
  journal={arXiv preprint arXiv:2603.02049},
  year={2026}
}

@article{liu2025reasongrounder,
  title={Reasongrounder: Lvlm-guided hierarchical feature splatting for open-vocabulary 3d visual grounding and reasoning},
  author={Liu, Zhenyang and Wang, Yikai and Zheng, Sixiao and Pan, Tongying and Liang, Longfei and Fu, Yanwei and Xue, Xiangyang},
  journal={arXiv preprint arXiv:2503.23297},
  year={2025}
}

@article{liu2026activevla,
  title={ActiveVLA: Injecting active perception into vision-language-action models for precise 3D robotic manipulation},
  author={Liu, Zhenyang and Gu, Yongchong and Wang, Yikai and Xue, Xiangyang and Fu, Yanwei},
  journal={arXiv preprint arXiv:2601.08325},
  year={2026}
}

@article{liu2025trivla,
  title={Trivla: A triple-system-based unified vision-language-action model for general robot control},
  author={Liu, Zhenyang and Gu, Yongchong and Zheng, Sixiao and Xue, Xiangyang and Fu, Yanwei},
  journal={arXiv e-prints},
  pages={arXiv--2507},
  year={2025}
}

@inproceedings{liu2025neural,
  title={A neural representation framework with llm-driven spatial reasoning for open-vocabulary 3d visual grounding},
  author={Liu, Zhenyang and Zheng, Sixiao and Chen, Siyu and Zhao, Cairong and Liang, Longfei and Xue, Xiangyang and Fu, Yanwei},
  booktitle={Proceedings of the 33rd ACM International Conference on Multimedia},
  pages={1042--1051},
  year={2025}
}

@inproceedings{liu2025spatial,
  title={Spatial-temporal aware visuomotor diffusion policy learning},
  author={Liu, Zhenyang and Wang, Yikai and Wang, Kuanning and Liang, Longfei and Xue, Xiangyang and Fu, Yanwei},
  booktitle={2025 IEEE/CVF International Conference on Computer Vision (ICCV)},
  pages={1--10},
  year={2025},
  organization={IEEE}
}
}

\clearpage
\appendix
\section{Implementation Details of OmniCaT}
\label{sub:implementation_details}

\subsection{Geometric Initialization and Refinement}
\noindent\textbf{Depth Alignment Optimization.}
During geometry initialization, the ERP space is subdivided into 42 perspective views (an increase from the default 12) to ensure dense coverage. Depth maps are aligned via a GPU-accelerated Least-Squares Minimal Residual (LSMR) solver. To mitigate artifacts, a grounding pipeline is employed to mask sky regions and to remove depth discontinuities (edge floaters).

\noindent\textbf{NavMesh Engineering.}
The raw NavMesh generated by the Recast algorithm is configured with cell size 0.1\,m, cell height 0.1\,m, agent height 0.1\,m, agent radius 0.05\,m, maximum climb 0.05\,m, and maximum slope 30$^{\circ}$. Post-processing consists of: (1)~height filtering to remove polygons above a roof threshold; (2)~connectivity filtering to retain only the largest connected component; (3)~surface snapping through dense ray-casting to correct misaligned vertices that lie below the actual mesh surface; and (4)~boundary erosion using KD-Tree accelerated searches to maintain a safety buffer from mesh edges. The resulting navigation graph is constructed by sampling points on the NavMesh surface at 0.05\,m spacing (up to 600K nodes), connecting neighbors within $5\times$ spacing with a height coherence constraint ($|\Delta y| \leq 0.5$\,m), and computing shortest paths via Dijkstra from the camera origin.

\subsection{Trajectory Planning Heuristics}
\noindent\textbf{Scoring Function for Target Paths.}
For target trajectories, 72 candidate nodes are sampled on a circular boundary at radius $r = \min(1.5 \times s_{obj}, d_{obj})$, where $s_{obj}$ is the object scale and $d_{obj}$ is the distance from the camera origin to the target. The optimal node is selected by maximizing a scoring function: $S = 5000 \cdot R_{los} - 100 \cdot \Delta\theta$, where $R_{los}$ is the Line-of-Sight ratio calculated via a 50-point projection test between the candidate position and the target center, and $\Delta\theta$ is the angular deviation from the ideal approach direction. Paths are computed via Dijkstra on the NavMesh graph, pruned of backtracking segments, and smoothed with B-spline interpolation (smoothing factor 0.5). All trajectories are resampled to 81 keyframes.

\noindent\textbf{Surround Behavior.}
For surround trajectories, the orbit radius is computed as $r = \min(d_{obj}, 2 \times s_{obj})$, clamped by a scene-dependent threshold ($4 \times$ global median depth). 72 points are sampled on the orbit circle; only those reachable via the NavMesh graph are connected to form continuous surround paths.

\noindent\textbf{Exploration Behavior.}
For wander trajectories, 8 radial directions are explored from the origin along the NavMesh, each extending up to $4 \times$ global median depth in path length. The top 10 trajectories are selected via Farthest Point Sampling on endpoint positions, with a minimum angular separation of 40$^{\circ}$ and maximum overlap ratio of 0.5.

\noindent\textbf{Reconstruction Behavior.}
Reconstruction targets are selected via the top-10 semantic objects ranked by segment area and viewpoint diversity (FPS in 3D space). For each target, multi-view paths are planned with a reduced smoothing factor (0.2) to preserve sharp viewpoint transitions optimal for 3D reconstruction coverage.

\noindent\textbf{Path Smoothing and Orientation.}
All discrete waypoints are smoothed using B-splines and Gaussian filtering ($\sigma=8.0$). Camera orientations are managed via Spherical Linear Interpolation (SLERP). For ``Wander'' and ``Target'' modes, orientations follow the path tangent and target centroid, respectively.

\section{Training and Inference Details}
\label{sec:impl_details}
The reported tokenizer uses $B=256$ bins and a normalized displacement limit $\Delta t_{\max}=0.025$. These settings determine precision and clipping tradeoffs; sign alignment does not remove quantization boundaries.

OmniCam is trained with the Accelerate framework under bf16 mixed precision. The autoregressive backbone is an OPT-style decoder equipped with FlashAttention-2. The optimizer is AdamW with $\beta = (0.9, 0.95)$, a weight decay of $0.01$, a peak learning rate of $3\times 10^{-5}$, a linear warm-up over the first one percent of the schedule, and cosine annealing for the remaining steps. The gradient is clipped to a global norm of $1.0$ and gradient checkpointing is enabled throughout the decoder. The discrete bin size is $B = 256$, and the maximum translation delta is $\Delta t_{\max} = 0.025$ in the normalized coordinate system. The hidden dimension, number of heads, and number of layers are configured as the base variant (hidden dim 1024, 8 heads, 12 layers). Training is performed for approximately one hundred effective epochs on a single node with eight NVIDIA H20 GPUs, with a per-GPU batch size of thirty-two. The localization loss weight is set to $\lambda = 0.5$. Inference adopts the \emph{sample} strategy with top-$k$ truncation at $k = 10$ and with the format-constrained decoding procedure described in \Cref{sec:generation}.

\noindent\textbf{Weighted Cross-Entropy Loss.} The per-token cross-entropy loss is weighted by three multiplicative factors to emphasize the most critical pose channels and learning phases: (1)~\emph{channel importance weights}: rotation channels receive weight 2.0, translation channels 1.5, and intrinsic channels 0.5, reflecting the relative importance of orientation accuracy over positional precision; (2)~\emph{temporal decay}: a linear schedule assigns weight 2.0 to the first frame and decreases to 1.0 at the final frame, prioritizing early trajectory establishment; and (3)~\emph{curriculum learning mask}: during the initial training phase, only early frames contribute to the loss, with later frames gradually enabled as training progresses, preventing the model from attempting to learn long-horizon dynamics before mastering short-horizon patterns.

\noindent\textbf{Caption Dropout Probabilities.} The five-variant caption dropout strategy applies the following distribution: full caption (35\%), prefix tags only (25\%), body text only (25\%), motion tag only (5\%), and partial prefix (10\%). This is the distribution used in the reported experiments.

\noindent\textbf{Computational Cost.} \Cref{tab:compute} reports training and inference costs for GenDoP and two OmniCam configurations. The original timing description includes depth estimation, point-cloud construction, encoding, and autoregressive decoding. The prefix condition is reused across decoding steps. These reported timings do not establish real-time operation under an unspecified deployment budget.

\begin{table*}[tp]
\centering
\small
\caption{Reported computational costs for GenDoP and OmniCam Base/Large.}
\label{tab:compute}
\begin{tabular}{l|ccc}
\toprule
\textbf{Method} & \textbf{Training Time} & \textbf{Inference Latency} & \textbf{\#Params} \\
\midrule
GenDoP & $\sim$18h (8$\times$H20) & 0.42s / trajectory & $\sim$380M \\
OmniCam (Base) & $\sim$24h (8$\times$H20) & 0.84s / trajectory & $\sim$500M \\
OmniCam (Large) & $\sim$40h (8$\times$H20) & 1.20s / trajectory & $\sim$1B \\
\bottomrule
\end{tabular}
\end{table*}

\section{Encoder Architecture Details}
\label{sec:encoder_details}
The dual-branch encoder routes appearance and text through a semantic pathway and point-cloud features through a geometry pathway. The outputs are fused by the decoder.

\noindent\textbf{Semantic Branch.} The panoramic image $\mathbf{I}_{pano}$ is processed by a frozen SAM\,3 vision encoder with 1024-dimensional features. The target descriptor $T_{obj}$ and caption guide the Language-Guided Resampler after visual feature extraction. This pathway provides target-related information, but it is not excluded from contributing to geometric decisions after decoder fusion.

\noindent\textbf{Geometric Branch.} Panoramic geometric priors are unprojected into a 3D point cloud, which is then structured into a voxel grid ($0.02$m resolution) to enable efficient feature extraction. A LitePT encoder uses positional and surface-normal features to extract per-voxel descriptors via sparse convolution, which are re-projected to yield $\mathbf{F}_{geo} \in \mathbb{R}^{M \times C_{geo}}$. This branch has no direct text input. Removing it increases collision rate from 10.4\% to 45.8\% in \Cref{tab:merged_ablation}; this supports the usefulness of the geometric pathway but does not isolate routing from capacity or input-information changes.

\section{Per-Behavior Decomposition}
\label{sec:per_behavior}
\Cref{tab:per_behavior} decomposes the OmniCaT results. Compared with retrained GenDoP, OmniCam has lower ATE for Wander (1.150 vs.\ 2.421), Target (0.677 vs.\ 1.088), Surround (1.258 vs.\ 1.932), and Reconstruct (0.385 vs.\ 1.055). The corresponding Target and Reconstruct reductions are 37.8\% and 63.5\%. This comparison supports performance across the four evaluated behaviors; it does not isolate caption dropout, loss weighting, or geometry as the cause of individual gains.

\begin{table*}[tp]
\centering
\small
\caption{Per-behavior decomposition of the main result on OmniCaT. Metrics are reported separately for Wander, Target, Surround, and Reconstruct trajectories.}
\label{tab:per_behavior}
\resizebox{\linewidth}{!}{
\begin{tabular}{l|c|cc|cc|cc|cc}
\toprule
\multirow{2}{*}{\textbf{Method}} & \multirow{2}{*}{\textbf{Dataset}} & \multicolumn{2}{c|}{\textbf{Wander}} & \multicolumn{2}{c|}{\textbf{Target}} & \multicolumn{2}{c|}{\textbf{Surround}} & \multicolumn{2}{c}{\textbf{Reconstruct}} \\
 & & \textbf{ATE $\downarrow$} & \textbf{FDE~$\downarrow$} & \textbf{ATE $\downarrow$} & \textbf{FDE~$\downarrow$} & \textbf{ATE $\downarrow$} & \textbf{FDE~$\downarrow$} & \textbf{ATE $\downarrow$} & \textbf{FDE~$\downarrow$} \\
\midrule
CCD~\cite{jiang2024cinematographic} & Pre-trained & 3.718 & 4.907 & 1.485 & 2.285 & 2.524 & 3.419 & 2.206 & 2.972 \\
E.T.~\cite{courant2024exceptional}  & Pre-trained & 3.151 & 4.083 & 1.735 & 1.945 & 1.580 & 1.640 & 1.467 & 1.804 \\
Director3D~\cite{li2024director3d}  & Pre-trained & 2.969 & 4.779 & 1.444 & 2.207 & 1.620 & 2.688 & 1.279 & 2.094 \\
Director3D  & OmniCaT & 2.562 & 4.250 & 1.130 & 2.089 & 1.351 & 2.304 & 1.045 & 1.950 \\
GenDoP~\cite{zhang2025gendop}       & Pre-trained & 2.667 & 4.408 & 1.362 & 2.131 & 2.176 & 3.248 & 1.255 & 2.016 \\
GenDoP & OmniCaT & 2.421 & 3.652 & 1.088 & 1.933 & 1.932 & 2.773 & 1.055 & 1.574 \\
\midrule
\cellcolor[HTML]{efefef}\textbf{OmniCam (Ours)} & \cellcolor[HTML]{efefef}\textbf{OmniCaT} & \cellcolor[HTML]{efefef}\textbf{1.150} & \cellcolor[HTML]{efefef}\textbf{2.301} & \cellcolor[HTML]{efefef}\textbf{0.677} & \cellcolor[HTML]{efefef}\textbf{0.962} & \cellcolor[HTML]{efefef}\textbf{1.258} & \cellcolor[HTML]{efefef}\textbf{1.836} & \cellcolor[HTML]{efefef}\textbf{0.385} & \cellcolor[HTML]{efefef}\textbf{0.749} \\
\bottomrule
\end{tabular}
}
\end{table*}

\section{Scaling Analysis}
\label{sec:scaling}
We examine the reported dependence on training-data volume and model size. These finite comparisons show trends within the evaluated range; they do not establish a scaling law or the absence of representational bottlenecks.

\subsection{Data Scaling}
\label{sec:data_scaling}
\Cref{tab:data_scaling} reports the base model at four nominal data scales, labeled 27K, 53K, 107K, and 267K in the original experiment record. Metrics improve across these settings: ATE decreases from 1.483 to 0.868, collision from 19.8\% to 10.4\%, and visibility increases from 34.6\% to 52.0\%. Four data points do not determine a power law or establish gains beyond the observed range.

\begin{table*}[tp]
\centering
\small
\caption{Reported effect of nominal data scale using the base architecture. Exact training-subset counts and the fixed evaluation split require verification.}
\label{tab:data_scaling}
\begin{tabular}{r|cc|cc|c|c}
\toprule
\textbf{\#Trajectories} & \textbf{ATE $\downarrow$} & \textbf{FDE $\downarrow$} & \textbf{RPE-R $\downarrow$} & \textbf{RPE-T $\downarrow$} & \textbf{Coll.~(\%) $\downarrow$} & \textbf{Vis.~(\%) $\uparrow$} \\
\midrule
27K (10\%)   & 1.483 & 2.586 & 1.836 & 0.054 & 19.8 & 34.6 \\
53K (20\%)   & 1.206 & 2.080 & 1.528 & 0.042 & 15.8 & 42.6 \\
107K (40\%)  & 1.003 & 1.709 & 1.297 & 0.033 & 12.7 & 48.2 \\
\cellcolor[HTML]{efefef}\textbf{267K (100\%)} & \cellcolor[HTML]{efefef}\textbf{0.868} & \cellcolor[HTML]{efefef}\textbf{1.462} & \cellcolor[HTML]{efefef}\textbf{1.135} & \cellcolor[HTML]{efefef}\textbf{0.027} & \cellcolor[HTML]{efefef}\textbf{10.4} & \cellcolor[HTML]{efefef}\textbf{52.0} \\
\bottomrule
\end{tabular}
\end{table*}

\subsection{Model Scaling}
\label{sec:model_scaling}
Three model sizes are compared: Small (hidden dimension 512, 8 heads, 8 layers; approximately 120M trainable parameters), Base (1024, 8, 12; approximately 500M), and Large (1536, 16, 24; approximately 1B). The reported setup holds other hyperparameters constant while changing architecture dimensions. ATE decreases from 1.127 to 0.743, collision from 13.8\% to 8.7\%, and visibility increases from 44.6\% to 56.3\% (\Cref{tab:model_scaling}). These observations support gains at the tested sizes, without establishing which representational choices explain the trend.

\begin{table*}[tp]
\centering
\small
\caption{Reported effect of model scale. The common training subset and parameter-count scope require verification against training records.}
\label{tab:model_scaling}
\begin{tabular}{l|r|cc|cc|c|c}
\toprule
\textbf{Size} & \textbf{\#Params} & \textbf{ATE $\downarrow$} & \textbf{FDE $\downarrow$} & \textbf{RPE-R $\downarrow$} & \textbf{RPE-T $\downarrow$} & \textbf{Coll.~(\%) $\downarrow$} & \textbf{Vis.~(\%) $\uparrow$} \\
\midrule
Small & $\sim$120M & 1.127 & 1.893 & 1.428 & 0.036 & 13.8 & 44.6 \\
Base  & $\sim$500M & 0.868 & 1.462 & 1.135 & 0.027 & 10.4 & 52.0 \\
\cellcolor[HTML]{efefef}\textbf{Large} & \cellcolor[HTML]{efefef}$\sim$\textbf{1B}   & \cellcolor[HTML]{efefef}\textbf{0.743} & \cellcolor[HTML]{efefef}\textbf{1.248} & \cellcolor[HTML]{efefef}\textbf{0.982} & \cellcolor[HTML]{efefef}\textbf{0.022} & \cellcolor[HTML]{efefef}\textbf{8.7} & \cellcolor[HTML]{efefef}\textbf{56.3} \\
\bottomrule
\end{tabular}
\end{table*}

\section{Failure Case Analysis}
\label{sec:failure_cases}
Understanding when and why the framework fails is essential for guiding future improvements. Five categories of failures are identified, each linked to a specific component or assumption of the pipeline:
\begin{itemize}
\item \textbf{Noisy panoramic geometry / thin structures} (geometric branch limitation): in large open environments or near thin structures (fences, poles, railings), the monocular depth estimator~\cite{wang2025moge} produces noisy or incomplete reconstructions with floating artifacts. These propagate into the LitePT voxel features, causing the geometric branch to either miss collision boundaries or hallucinate obstacles, resulting in physically implausible paths.
\item \textbf{Ambiguous target phrases} (centroid injection limitation): when multiple semantically similar objects coexist (e.g., ``the wooden door'' in a hallway with several doors, or ``the pillar'' among clustered columns), the attention-aggregated centroid may ground to an incorrect instance. Spatial descriptors in the textual guidance partially mitigate this, but the failure persists when such cues are absent or when objects are visually indistinguishable from the panoramic viewpoint.
\item \textbf{Dynamic and interactive scenes} (static-scene assumption): the current framework assumes a static panoramic observation. In environments with moving objects (people, vehicles, opening doors), the generated trajectory cannot react to temporal changes, potentially producing paths that collide with dynamic obstacles or fail to track moving targets. Extending OmniCam to video-conditioned or closed-loop settings is an important direction for future work.
\item \textbf{Reflective/transparent surfaces} (semantic branch limitation): glass walls and mirrors produce ambiguous visual features in the vision encoder, leading to unreliable target-aware conditioning and erratic trajectory orientation near such surfaces.
\item \textbf{Occluded targets} (centroid injection limitation): when the target object is heavily occluded, the cross-attention weights spread to visible but irrelevant regions, causing the estimated centroid to shift away from the true target location.
\end{itemize}

These cases identify limitations in perception and target grounding. Decoder errors, quantization, and exposure to planner-generated supervision can also contribute; the examples do not isolate their relative importance.

\noindent\textbf{Broader Impacts.} Scene-aware camera planning may assist content creation and robotic perception. Camera deployment can also raise privacy or surveillance concerns, and generated video may be used misleadingly. In robotics, errors in inferred geometry or open-loop trajectories can produce unsafe camera motions; the reported collision metrics are not deployment guarantees. Dataset release requires source-specific permission and license review, and physical deployment requires validation against the actual environment.

\section{Attention Visualization and Centroid Analysis}
\label{sec:attn_vis}
The target centroid supplies an explicit spatial anchor (\Cref{sec:resampling}). The following visualizations illustrate the attention patterns used to derive this anchor.

\Cref{fig:attn_vis} shows the input panorama, reconstructed point cloud, and attention projected onto scene points. The examples show concentration near referenced regions. These visualizations complement the grounding ablation, but do not quantify localization accuracy or establish that attention alone causes improved visibility. Point-level aggregation is specified conceptually in \Cref{eq:target_centroid}; its implementation details require verification.

\begin{figure*}[tp]
\centering
\includegraphics[width=\linewidth]{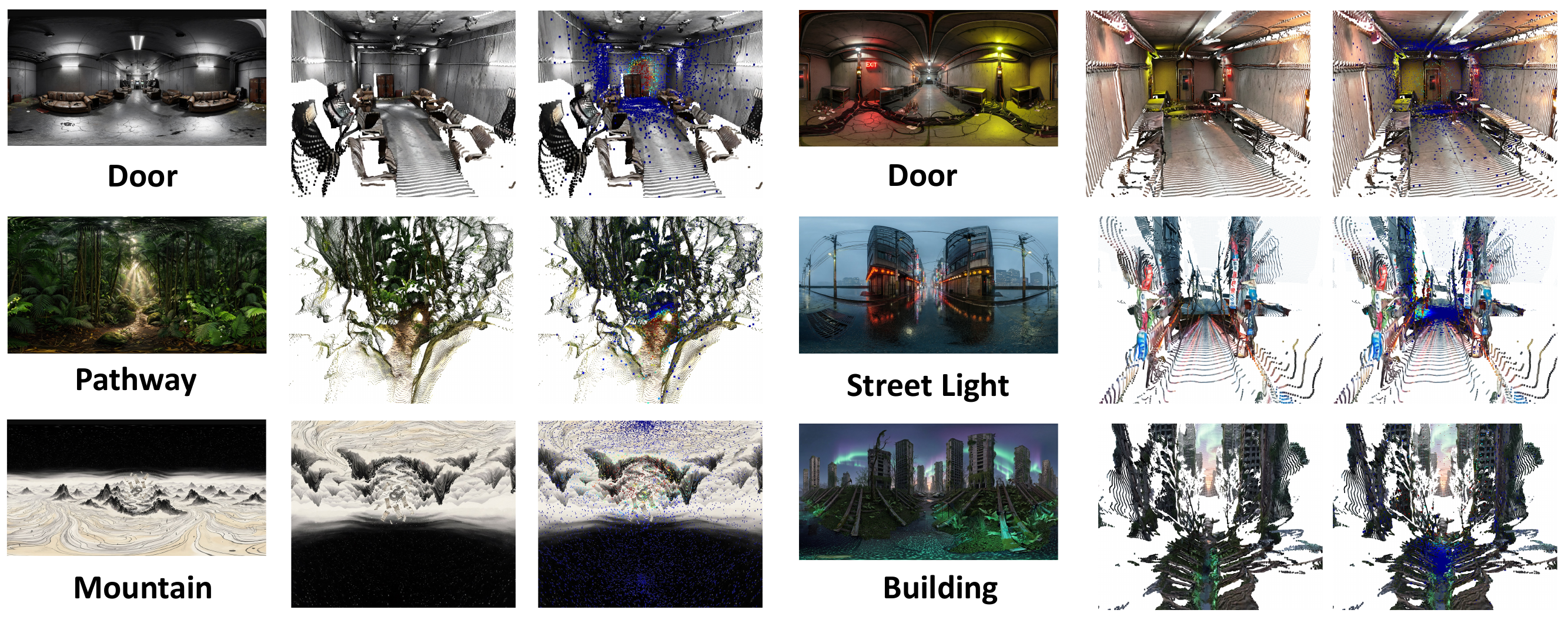}
\caption{\textbf{Attention visualization of the Language-Guided Semantic Resampler.} Each group shows (from left to right) the input panorama, the reconstructed point cloud, and the 3D attention map. The maps visualize target-related attention on reconstructed geometry for the displayed examples; they are qualitative illustrations rather than a quantitative localization evaluation.}
\label{fig:attn_vis}
\end{figure*}

\section{Caption Dropout Ablation}
\label{sec:caption_dropout}
We compare no caption dropout, single-variant dropout (dropping the entire caption with probability 0.1), and the five-variant distribution in \Cref{sec:impl_details}. The reported values are means over three training seeds, with ATE standard deviation below 0.02.
The five-variant scheme has higher visibility than no dropout (52.0\% vs.\ 38.7\%) and lower angular deviation (10.9$^{\circ}$ vs.\ 18.4$^{\circ}$), while no dropout has slightly lower ATE (0.856 vs.\ 0.868) and collision (10.1\% vs.\ 10.4\%). This is a tradeoff between the reported metrics. Attribution to instruction-format overfitting requires separate evaluation on complete, partial, and rephrased instructions.

\begin{table*}[tp]
\centering
\small
\caption{Caption-dropout ablation. The five-variant scheme favors visibility and angle, while no dropout has slightly lower geometric errors.}
\label{tab:caption_dropout}
\setlength{\tabcolsep}{4.5pt}
\begin{tabular}{l|cc|cc|c|cc}
\toprule
\textbf{Dropout Strategy} & \textbf{ATE $\downarrow$} & \textbf{FDE $\downarrow$} & \textbf{RPE-R $\downarrow$} & \textbf{RPE-T $\downarrow$} & \textbf{Coll.~(\%) $\downarrow$} & \textbf{Vis.~(\%) $\uparrow$} & \textbf{Angle~($^{\circ}$) $\downarrow$} \\
\midrule
No dropout          & 0.856 & 1.438 & 1.118 & 0.026 & 10.1 & 38.7 & 18.4 \\
Single-variant      & 0.862 & 1.451 & 1.126 & 0.027 & 10.3 & 45.8 & 14.2 \\
\cellcolor[HTML]{efefef}\textbf{Full (5-variant)}    & \cellcolor[HTML]{efefef}\textbf{0.868} & \cellcolor[HTML]{efefef}\textbf{1.462} & \cellcolor[HTML]{efefef}\textbf{1.135} & \cellcolor[HTML]{efefef}\textbf{0.027} & \cellcolor[HTML]{efefef}\textbf{10.4} & \cellcolor[HTML]{efefef}\textbf{52.0} & \cellcolor[HTML]{efefef}\textbf{10.9} \\
\bottomrule
\end{tabular}
\end{table*}

\section{Theoretical Analysis of Hybrid Pose Tokenization}
\label{sec:theory_appendix}

We analyze reconstruction of a known trajectory after quantization and dequantization. These results concern representation error; they do not bound autoregressive prediction error, depth-estimation error, or errors in scene-scale calibration. Let $d$ denote the rotation-angle geodesic distance on $\mathrm{SO}(3)$, measured in radians, with range $[0,\pi]$. This distance is invariant under left and right multiplication by a rotation.

\begin{proposition}[Deterministic bounds on rotation reconstruction error]
\label{prop:rotation_error}
Let $R_1,\ldots,R_T\in\mathrm{SO}(3)$ and $\Delta R_t=R_{t-1}^{-1}R_t$ for $t\ge2$. Suppose decoded relative rotations satisfy $d(\widehat{\Delta R}_t,\Delta R_t)\le\epsilon_t$, and the decoded initial rotation satisfies $d(\widehat R_1,R_1)\le e_1$. Reconstruct relative orientations by $\widehat R_t=\widehat R_{t-1}\widehat{\Delta R}_t$. Then
\begin{equation}
  d(\widehat R_t,R_t)\le\min\!\left\{\pi,\ e_1+\sum_{j=2}^{t}\epsilon_j\right\}.
\end{equation}
If $\widehat R_1=R_1$ and every $\epsilon_j\le\epsilon$, this bound becomes $\min\{\pi,(t-1)\epsilon\}$. In contrast, if absolute orientations are decoded with per-frame reconstruction error at most $\epsilon$, then $d(\widehat R_t^{\mathrm{abs}},R_t)\le\epsilon$ at every frame, without integrating earlier quantization residuals.
\end{proposition}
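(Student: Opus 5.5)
The plan is to argue by induction on $t$, using only two properties of the geodesic distance $d$: the triangle inequality, and bi-invariance ($d(AB,AC)=d(B,C)=d(BA,CA)$ for rotations $A,B,C$). Both properties are stated just before the proposition. The cap at $\pi$ needs no work, because $d$ takes values in $[0,\pi]$. It therefore suffices to prove the uncapped bound $d(\widehat R_t,R_t)\le e_1+\sum_{j=2}^{t}\epsilon_j$ and then take the minimum with $\pi$.

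The base case $t=1$ is the hypothesis $d(\widehat R_1,R_1)\le e_1$. For the inductive step we write $R_t=R_{t-1}\Delta R_t$, which follows from the definition $\Delta R_t=R_{t-1}^{-1}R_t$, and $\widehat R_t=\widehat R_{t-1}\widehat{\Delta R}_t$. We insert the intermediate rotation $\widehat R_{t-1}\Delta R_t$ and apply the triangle inequality:
\begin{equation*}
d(\widehat R_t,R_t)\le d\bigl(\widehat R_{t-1}\widehat{\Delta R}_t,\ \widehat R_{t-1}\Delta R_t\bigr)+d\bigl(\widehat R_{t-1}\Delta R_t,\ R_{t-1}\Delta R_t\bigr).
\end{equation*}
Left invariance reduces the first term to $d(\widehat{\Delta R}_t,\Delta R_t)\le\epsilon_t$. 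Right invariance reduces the second term to $d(\widehat R_{t-1},R_{t-1})$, which the inductive hypothesis bounds by $e_1+\sum_{j=2}^{t-1}\epsilon_j$. Adding the two gives the claimed sum, and intersecting with the trivial bound $d\le\pi$ gives the minimum.

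The specializations are then immediate. If $\widehat R_1=R_1$, then $e_1=0$. If moreover every $\epsilon_j\le\epsilon$, the sum has $t-1$ terms, so it is at most $(t-1)\epsilon$. The absolute-encoding statement is not an inductive claim at all: each $\widehat R_t^{\mathrm{abs}}$ is decoded independently, so its error is just the per-frame hypothesis $d(\widehat R_t^{\mathrm{abs}},R_t)\le\epsilon$. No earlier residual appears in it because no composition is performed.

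The only step needing care is the bi-invariance of $d$. I would justify it from the formula $d(A,B)=\arccos\bigl((\operatorname{tr}(A^{-1}B)-1)/2\bigr)$. The trace is unchanged by conjugation, $(CA)^{-1}(CB)=A^{-1}B$, and $(AC)^{-1}(BC)=C^{-1}A^{-1}BC$ is a conjugate of $A^{-1}B$, so $d$ is invariant under both left and right multiplication. This is the one place where the choice of metric matters; with this metric the rest is routine.
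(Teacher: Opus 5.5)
Your proposal is correct and follows essentially the same argument as the paper: insert $\widehat R_{t-1}\Delta R_t$, apply the triangle inequality and bi-invariance, iterate, cap at $\pi$, and read the absolute case directly off the per-frame hypothesis. Your trace-based justification of bi-invariance is a small addition; the paper simply asserts that property before the proposition.
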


\begin{proof}
The triangle inequality and bi-invariance give
\begin{align}
 d(\widehat R_t,R_t)
 &\le d(\widehat R_{t-1}\widehat{\Delta R}_t,\widehat R_{t-1}\Delta R_t)\notag\\
 &\quad+d(\widehat R_{t-1}\Delta R_t,R_{t-1}\Delta R_t)\\
 &=d(\widehat{\Delta R}_t,\Delta R_t)+d(\widehat R_{t-1},R_{t-1}).
\end{align}
Iterating this inequality proves the sum bound. Every rotation-angle distance is at most $\pi$, giving the stated minimum. The absolute-encoding bound follows directly from its assumed per-frame reconstruction guarantee.
\end{proof}

This is a worst-case upper bound, not a claim that relative errors necessarily grow at each frame or obey a particular expected growth rate. Residuals can cancel. Absolute encoding avoids integration of rotation quantization residuals, but predicted absolute tokens can still depend on previous generated tokens and therefore need not have independent prediction errors.

\noindent\textbf{Connecting component quantization to angular error.}
The bound $\epsilon$ depends on the quantizer and the rotation decoder; $B$ component bins do not by themselves imply $\epsilon=\pi/B$. For a conditional example, let $q\in\mathbb S^3$ be a unit quaternion, and suppose component dequantization produces $v$ with $\|v-q\|_2\le\eta<1$. If the decoder normalizes $v$ to $\widehat q=v/\|v\|_2$, then
\begin{equation}
 d(R(\widehat q),R(q))\le2\arcsin\eta.
\end{equation}
Indeed, $q^\top v>0$, so the angle $\phi$ between $q$ and $v$ lies in $[0,\pi/2)$. Decomposing the error along $\widehat q$ gives $\|v-q\|_2^2=(\|v\|_2-\cos\phi)^2+\sin^2\phi$, hence $\phi\le\arcsin\eta$. The corresponding rotation angle is $2\phi$. For uniform bins of width $h$, center decoding without clipping gives at most $h/2$ error per quaternion component and thus $\eta\le h$. In particular, $B>2$ equal bins on $[-1,1]$ give the conditional bound $2\arcsin(2/B)$. Applying such a numerical bound requires these decoder and range assumptions to hold.

\begin{proposition}[Representative-independent quaternion sign selection]
\label{prop:hemisphere}
Let $R_1,\ldots,R_T$ satisfy $d(R_{t-1},R_t)<\pi$ for every $t\ge2$, and choose arbitrary unit-quaternion representatives $q_t$ of $R_t$. For $q=(w,x,y,z)$, let $c(q)$ select the sign for which the first nonzero component in the order $(w,x,y,z)$ is positive. Define
\begin{equation}
 \begin{gathered}
 \widetilde q_1=c(q_1),\\
 \widetilde q_t=
 \begin{cases}
 q_t,&\langle\widetilde q_{t-1},q_t\rangle>0,\\
 -q_t,&\langle\widetilde q_{t-1},q_t\rangle<0,
 \end{cases}\quad t\ge2.\end{gathered}
\end{equation}
This produces a unique aligned quaternion sequence determined by the rotation sequence, independently of the initial choices between $q_t$ and $-q_t$. Consequently, any deterministic component quantizer with fixed bin-boundary rules produces the same token sequence from all such choices of input representatives. The tokenization remains many-to-one.
\end{proposition}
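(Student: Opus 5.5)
The plan is an induction on $t$. It shows that the aligned sequence $\widetilde q_t$ depends only on the rotations $R_1,\ldots,R_T$, and not on which representatives $q_t$ or $-q_t$ were supplied. The key fact is the standard relation between quaternion inner products and rotation angle: for unit quaternions $p,q$ representing rotations $P,Q$,
\begin{equation}
 |\langle p,q\rangle|=\cos\bigl(d(P,Q)/2\bigr).
\end{equation}
This holds because $\langle p,q\rangle$ is the scalar part of $\bar p q$, which represents $P^{-1}Q$, and a unit quaternion with scalar part $\cos(\theta/2)$ encodes a rotation of angle $\theta$. Since $d(R_{t-1},R_t)<\pi$, we get $|\langle\widetilde q_{t-1},q_t\rangle|=\cos(d/2)>0$. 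So the inner product is never zero, and the two-case definition of $\widetilde q_t$ is exhaustive and well defined. This nonvanishing step is the only place the hypothesis $d<\pi$ enters. Making the identity rigorous is the one step that needs care: I would verify it through the scalar part of $\bar p q$, and note that $|w|=\cos(\theta/2)$ for $\theta\in[0,\pi]$ is exactly the double-cover statement.

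\textbf{Base case.} The representatives of $R_1$ are exactly $\pm q_1$, and the function $c$ satisfies $c(q)=c(-q)$. Indeed, a unit quaternion has some nonzero component, so "the first nonzero component" is defined, and negating $q$ flips its sign. The rule therefore picks the same element of $\{q_1,-q_1\}$ either way, and $\widetilde q_1$ depends only on $R_1$.

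\textbf{Inductive step.} Suppose $\widetilde q_{t-1}$ is determined by $R_1,\ldots,R_{t-1}$. Replacing $q_t$ by $-q_t$ flips the sign of $\langle\widetilde q_{t-1},q_t\rangle$, so the rule outputs $-(-q_t)=q_t$ in the branch where it previously output $q_t$, and symmetrically in the other branch. Equivalently, $\widetilde q_t$ is the unique element of $\{q_t,-q_t\}$ that has positive inner product with $\widetilde q_{t-1}$. This characterization makes no reference to the chosen representative, which gives uniqueness and representative-independence at step $t$.

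The token statement follows immediately. A deterministic quantizer applied to an identical real vector sequence yields an identical token sequence, and fixed bin-boundary rules make the map a genuine function. For the many-to-one remark, I would simply note that finitely many bins cover a continuum of quaternions, so distinct rotation sequences, for example ones differing by less than a bin width, share tokens. This requires no further argument beyond pigeonhole.
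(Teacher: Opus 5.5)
Your proposal is correct and follows the paper's proof step for step. The paper uses $c(q)=c(-q)$ to fix $\widetilde q_1$, then the identity $|\langle\widetilde q_{t-1},q_t\rangle|=\cos\bigl(d(R_{t-1},R_t)/2\bigr)>0$ so that exactly one of $\pm q_t$ qualifies and flipping the input representative leaves $\widetilde q_t$ unchanged, then induction with a deterministic quantizer, and finally the finite vocabulary for non-injectivity. Your scalar-part-of-$\bar p q$ derivation just fills in the distance formula the paper cites. For the many-to-one remark, rely on the pigeonhole count (finitely many token sequences versus a continuum of rotation sequences) rather than your example, because sequences differing by less than a bin width can still straddle a bin boundary.
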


\begin{table*}[tp]
\centering
\small
\caption{\textbf{Privileged-information planner comparison.} The planner receives reconstructed geometry, navigation meshes, and semantic annotations. OmniCam estimates geometry from the input panorama and text.}
\label{tab:planner_oracle}
\resizebox{\linewidth}{!}{
\begin{tabular}{l|c|c|ccc}
\toprule
\textbf{Method} & \textbf{NavMesh/annotation access} & \textbf{Runtime} & \textbf{Coll.~(\%) $\downarrow$} & \textbf{Vis.~(\%) $\uparrow$} & \textbf{Angle~($^{\circ}$) $\downarrow$} \\
\midrule
Heuristic Planner & Yes (privileged) & $\sim$16.5s & 5.2 & 58.7 & 9.4 \\
GenDoP~\cite{zhang2025gendop} & No & 0.42s & 30.4 & 33.6 & 27.6 \\
{OmniCam (Ours)} & No & 0.84s &{10.4} & {52.0} &{10.9} \\
\bottomrule
\end{tabular}
}
\end{table*}

\begin{proof}
A unit quaternion has a nonzero component, and $c(q)=c(-q)$, so the first representative is uniquely fixed, including when $w_1=0$. Suppose the aligned representative at $t-1$ is fixed. The quaternion formula for rotation distance gives
\[
 |\langle\widetilde q_{t-1},q_t\rangle|
 =\cos\!\left(\frac{d(R_{t-1},R_t)}{2}\right)>0.
\]
Exactly one of $q_t$ and $-q_t$ therefore has positive inner product with the already aligned previous quaternion. Replacing the input $q_t$ by $-q_t$ reverses the inner-product sign and leaves the selected representative unchanged. Induction proves representative independence of the entire sequence, and deterministic quantization preserves this property. It does not preserve injectivity: a finite token vocabulary cannot distinguish every continuous rotation sequence.
\end{proof}

The proposition specifies sufficient sign and boundary conventions; a concrete implementation must document these choices. At an exactly $\pi$ inter-frame rotation, both inner products are zero and an additional deterministic tie rule is required, such as selecting $c(q_t)$. No shortest-arc preference exists at that tie. This sequence-level sign selection removes double-cover ambiguity under the stated conventions; it does not construct a globally continuous quaternion parameterization of $\mathrm{SO}(3)$, remove quantization boundaries, or make lossy quantization invertible.

\noindent\textbf{Translation: local resolution versus accumulated error.}
For additive displacements in a common coordinate frame, relative translation trades a smaller encoding range for integration error. If normalized absolute coordinates lie in $[-L,L]$ and displacements lie in $[-D,D]$ per component, uniform center decoding with $B$ bins and no clipping bounds the per-frame absolute position error by $\sqrt3L/B$ and each displacement error by $\sqrt3D/B$. Integrating the latter gives
\begin{equation}
 \|\widehat{\mathbf p}_t-\mathbf p_t\|_2
 \le\|\widehat{\mathbf p}_1-\mathbf p_1\|_2+(t-1)\frac{\sqrt3D}{B}.
\end{equation}
Thus $D\ll L$ improves local displacement resolution, while long sequences can accumulate translation error. If displacements are expressed in rotating local frames, orientation errors must also be included. The hybrid representation is therefore a design tradeoff whose trajectory-level benefit requires empirical evaluation, rather than a guarantee of universally minimal pose error.

\section{Heuristic Planner Oracle Comparison}
\label{sec:planner_oracle}

The heuristic planner is evaluated with reconstructed point clouds, navigation meshes, semantic masks, and target annotations. This is a privileged-information comparison on estimated geometry, rather than an oracle with complete physical ground truth. It attains 5.2\% collision and 58.7\% visibility, versus OmniCam's 10.4\% and 52.0\%. OmniCam estimates geometry internally from the panorama and does not run navmesh construction or graph search. The reported runtimes (approximately 16.5s vs.\ 0.84s) differ by a factor of 19.6; the timing assumptions must be matched before attributing the difference to a particular component.

\section{More Qualitative Results}
\label{sec:more_qualitative}

\Cref{fig:more_qualitative_comparison} provides additional qualitative trajectory comparisons. Each row shows one scene, and the columns show the outputs of different methods. These selected examples illustrate differences in trajectory placement and orientation; they do not establish dataset-wide collision or smoothness guarantees.

\begin{figure*}[tp]
\centering
\includegraphics[width=\linewidth,height=0.82\textheight,keepaspectratio]{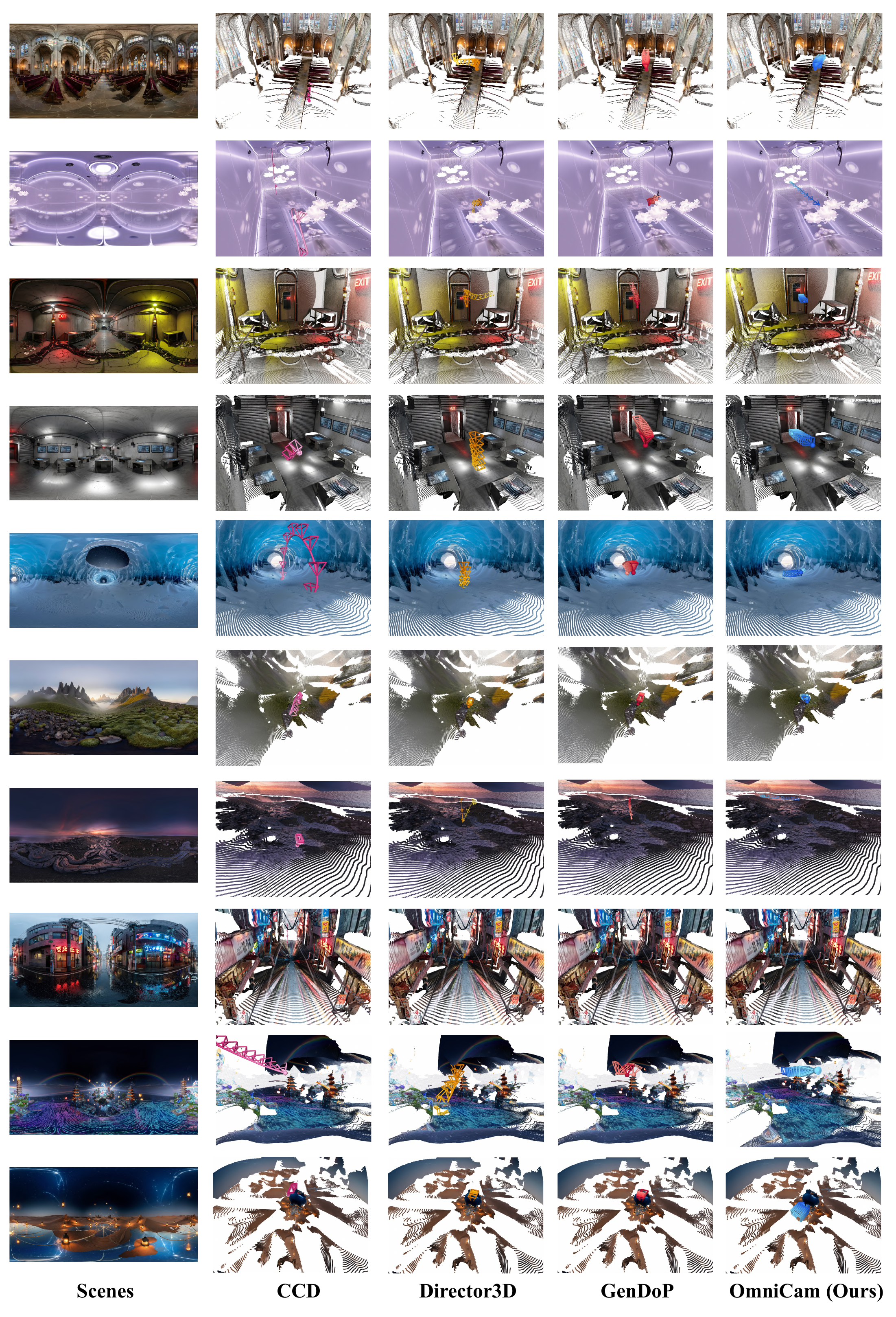}
\caption{\textbf{Additional qualitative comparison on panoramic scenes.} Extending the comparison in \Cref{fig:qualitative}, each row shows a different scene spanning indoor, outdoor, and stylized environments, and each column shows the trajectory produced by a specific method. The examples illustrate differences in target-directed trajectories across scene categories. The trajectory captions used for each scene are listed in \Cref{tab:qualitative_captions}.}
\label{fig:more_qualitative_comparison}
\end{figure*}

\begin{table*}[tp]
\centering
\small
\caption{\textbf{Trajectory captions for \Cref{fig:more_qualitative_comparison}.} Each row corresponds to a scene in the qualitative comparison figure, listing the scene type, target object, and the hierarchical textual instruction provided to OmniCam.}
\label{tab:qualitative_captions}
\setlength{\tabcolsep}{4pt}
\begin{tabularx}{\textwidth}{@{}c|l|l|>{\raggedright\arraybackslash}X@{}}
\toprule
\textbf{Row} & \textbf{Type} & \textbf{Target} & \textbf{Caption} \\
\midrule
1 & Target & door & \texttt{<TASK: TARGET> <TARGET\_OBJ: door> <TARGET\_DIR: Back>} Approach the back door to bring it into focus. The sequence concludes by arriving at the final \texttt{<END\_OBJ: door>} position within the \texttt{<END\_DIR: Back>} sector of the panoramic scene. \\
\midrule
2 & Target & light fixture & \texttt{<TASK: TARGET> <TARGET\_OBJ: light fixture> <TARGET\_DIR: Front>} Approach the front light fixture to bring it into focus. The sequence concludes by arriving at the final \texttt{<END\_OBJ: light fixture>} position within the \texttt{<END\_DIR: Front>} sector of the panoramic scene. \\
\midrule
3 & Wander & free space & \texttt{<TASK: WANDER> <TARGET\_OBJ: free space> <TARGET\_DIR: Front>} Approach the front workstation to explore the layout of the industrial corridor. The sequence concludes by arriving at the final \texttt{<END\_OBJ: free space>} position within the \texttt{<END\_DIR: Front>} sector of the panoramic scene. \\
\midrule
4 & Target & door & \texttt{<TASK: TARGET> <TARGET\_OBJ: door> <TARGET\_DIR: Back>} Approach the back door to bring it into focus. The sequence concludes by arriving at the final \texttt{<END\_OBJ: door>} position within the \texttt{<END\_DIR: Back>} sector of the panoramic scene. \\
\midrule
5 & Target & ice tunnel & \texttt{<TASK: TARGET> <TARGET\_OBJ: ice tunnel> <TARGET\_DIR: Front>} Approach the ice tunnel to bring it into focus. The sequence concludes by arriving at the final \texttt{<END\_OBJ: ice tunnel>} position within the \texttt{<END\_DIR: Front>} sector of the panoramic scene. \\
\midrule
6 & Wander & free space & \texttt{<TASK: WANDER> <TARGET\_OBJ: free space> <TARGET\_DIR: Front>} Approach the distant peaks to explore the mountain valley. The sequence concludes by arriving at the final \texttt{<END\_OBJ: free space>} position within the \texttt{<END\_DIR: Front>} sector of the panoramic scene. \\
\midrule
7 & Surround & clouds & \texttt{<TASK: SURROUND> <TARGET\_OBJ: clouds> <TARGET\_DIR: Front>} Capture the clouds from multiple angles to highlight their form and color against the twilight sky. The sequence concludes by arriving at the final \texttt{<END\_OBJ: clouds>} position within the \texttt{<END\_DIR: Front>} sector of the panoramic scene. \\
\midrule
8 & Surround & neon sign & \texttt{<TASK: SURROUND> <TARGET\_OBJ: neon sign> <TARGET\_DIR: Right>} Capture the right neon sign from multiple angles to highlight its glowing details and reflections. The sequence concludes by arriving at the final \texttt{<END\_OBJ: neon sign>} position within the \texttt{<END\_DIR: Right>} sector of the panoramic scene. \\
\midrule
9 & Wander & free space & \texttt{<TASK: WANDER> <TARGET\_OBJ: free space> <TARGET\_DIR: Right>} Approach the right to explore the surreal landscape beyond the pagoda. The sequence concludes by arriving at the final \texttt{<END\_OBJ: free space>} position within the \texttt{<END\_DIR: Right>} sector of the panoramic scene. \\
\midrule
10 & Surround & sand dune & \texttt{<TASK: SURROUND> <TARGET\_OBJ: sand dune> <TARGET\_DIR: Front>} Capture the sand dune from multiple angles to reveal its textured ridges and surrounding lanterns. The sequence concludes by arriving at the final \texttt{<END\_OBJ: sand dune>} position within the \texttt{<END\_DIR: Front>} sector of the panoramic scene. \\
\bottomrule
\end{tabularx}
\end{table*}

\begin{table*}[tp]
\centering
\small
\caption{\textbf{Downstream evaluation: camera-controlled video generation with WorldStereo~\cite{zhang2026worldstereo}.} Different trajectory sources are compared using the same video generation model.}
\label{tab:downstream_video}
\resizebox{\linewidth}{!}{
\begin{tabular}{l|ccccc}
\toprule
\textbf{Method} & \textbf{CLIP-S $\uparrow$} & \textbf{Adj.-SSIM $\uparrow$} & \textbf{Aest. proxy $\uparrow$} & \textbf{Fidelity $\uparrow$} & \textbf{Coverage} \\
\midrule
GenDoP~\cite{zhang2025gendop} + WorldStereo & 0.2644 & 0.4248 & 7.647 & 0.6924 & 35.8\% \\
\cellcolor[HTML]{efefef}\textbf{OmniCam (Ours) + WorldStereo} & \cellcolor[HTML]{efefef}\textbf{0.3018} & \cellcolor[HTML]{efefef}\textbf{0.4904} & \cellcolor[HTML]{efefef}\textbf{9.021} & \cellcolor[HTML]{efefef}\textbf{0.7560} & \cellcolor[HTML]{efefef}38.5\% \\
\bottomrule
\end{tabular}
}
\end{table*}

\begin{table*}[tp]
\centering
\small
\caption{\textbf{Downstream evaluation: robotic manipulation with $\pi_{0.5}$.} Reported grasp success (SR) and execution visibility (Vis.). The stated 200-episode setup and fixed-camera SR aggregation require reconciliation with the episode records.}
\label{tab:downstream_robot}
\begin{tabular}{l|cc}
\toprule
\textbf{Trajectory Source} & \textbf{Grasp SR~(\%) $\uparrow$} & \textbf{Exec.\ Vis.~(\%) $\uparrow$} \\
\midrule
$\pi_{0.5}$~\cite{intelligence2025pi} (fixed camera) & 56.3 & 47.2 \\
GenDoP~\cite{zhang2025gendop} & 48.0 & 52.6 \\
\cellcolor[HTML]{efefef}\textbf{OmniCam (Ours)} & \cellcolor[HTML]{efefef}\textbf{72.5} & \cellcolor[HTML]{efefef}\textbf{71.8} \\
\bottomrule
\end{tabular}
\end{table*}

\Cref{fig:more_qualitative} presents additional downstream examples from indoor, outdoor, and stylized environments. Each row contains a panorama, selected generated video frames, and reconstruction results with and without the OmniCam trajectory. The figure illustrates qualitative differences; reconstruction accuracy and completeness require a separate quantitative evaluation.

\begin{figure*}[tp]
\centering
\includegraphics[width=\linewidth,height=0.83\textheight,keepaspectratio]{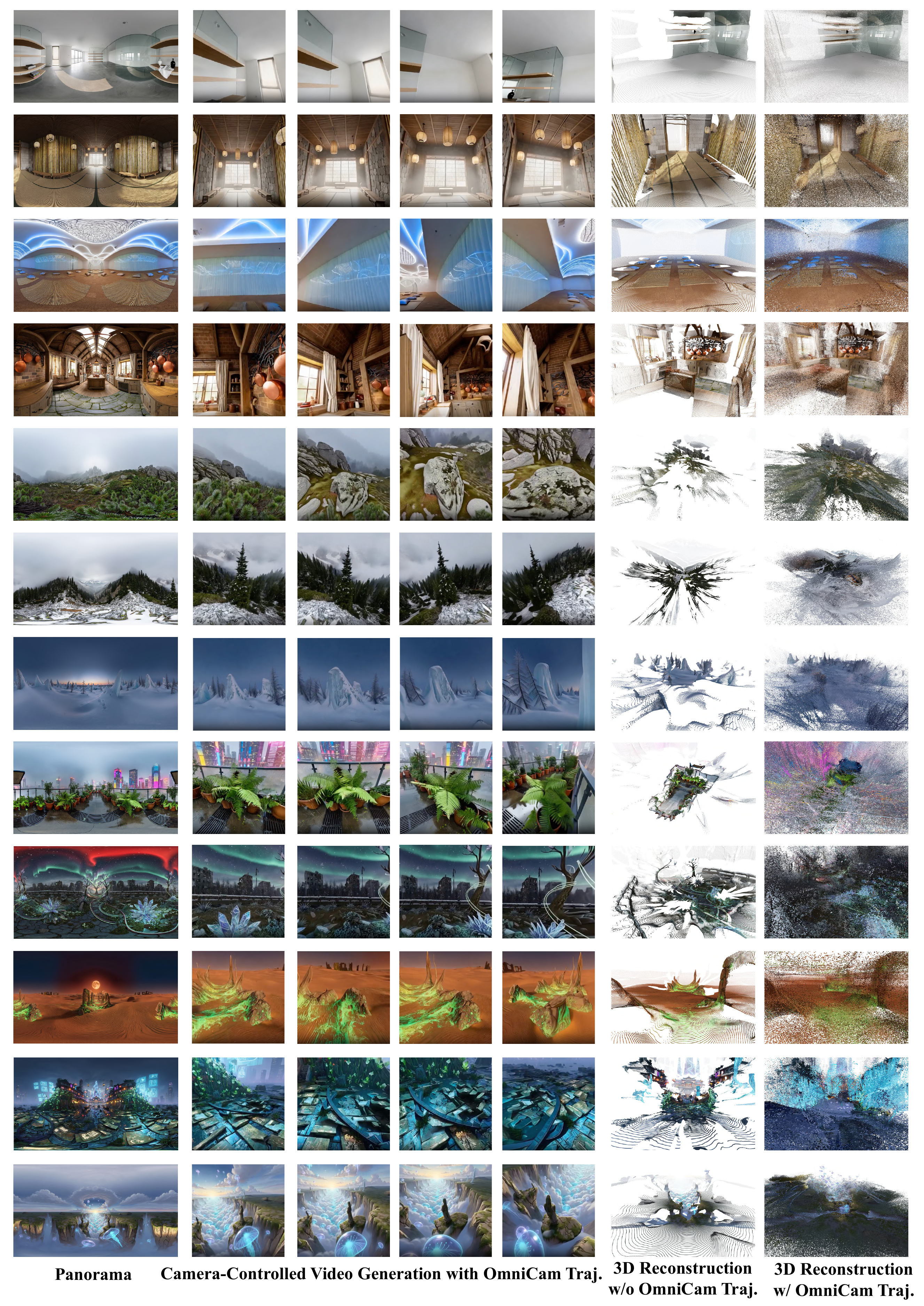}
\caption{\textbf{Qualitative downstream results across diverse panoramic scenes.} Each row shows the input panorama (left), four selected frames from camera-controlled video generation using an OmniCam trajectory (middle), and 3D reconstruction results without and with the OmniCam trajectory (two rightmost columns). The examples include indoor, outdoor, and stylized environments. These visual comparisons illustrate the pipeline outputs; they do not quantify reconstruction accuracy or guarantee collision-free motion.}
\label{fig:more_qualitative}
\end{figure*}

\section{Downstream Application Validation}
\label{sec:downstream_appendix}

\noindent\textbf{Camera-Controlled Video Generation.}
OmniCam and GenDoP trajectories serve as camera control signals for WorldStereo~\cite{zhang2026worldstereo}. The reported evaluation uses 484 trajectories across indoor, outdoor, and stylized scenes. For each prompt, the resulting poses drive point-cloud rendering and subsequent video synthesis.

Generated videos are evaluated with five reported measures. \emph{CLIP-Score} averages the cosine similarity between CLIP ViT-B/32 image and scene-description embeddings over 10 sampled frames. \emph{Consecutive-frame SSIM} measures adjacent-frame similarity and is sensitive to motion magnitude; it is not a motion-compensated temporal-consistency metric. \emph{Aesthetic proxy} combines CLIP embedding magnitude and inter-frame embedding coherence on a reported 0--10 scale. Its formula and normalization require verification. \emph{Known-region Fidelity} compares generated frames with point-cloud renderings in observed regions using SSIM, so it uses a rendering reference. \emph{Generation Coverage} is the mean unknown-region mask ratio, a descriptor of synthesis difficulty rather than an accuracy metric.

Relative to GenDoP, the reported OmniCam results improve CLIP score by 14.1\%, consecutive-frame SSIM by 15.4\%, the aesthetic proxy by 18.0\%, and known-region fidelity by 9.2\%. The unknown-region mask ratio is 38.5\% vs.\ 35.8\%. These results describe outputs of the trajectory-plus-video pipeline; they do not isolate framing, path length, speed, or geometric consistency as the cause of improvement. Quantitative reconstruction accuracy and completeness are not evaluated in this table.

\noindent\textbf{Robotic Manipulation (Active Perception).}
OmniCam is evaluated as an active-perception module for $\pi_{0.5}$~\cite{intelligence2025pi}, generating camera trajectories to improve target observations. A trial succeeds when the designated object is grasped within the time limit. The original record describes 200 episodes across 50 tabletop scenes, with reported success rates of 72.5\% for OmniCam, 48.0\% for GenDoP, and 56.3\% for a fixed camera. The fixed-camera percentage is incompatible with a single pooled denominator of 200 binary trials; its aggregation remains unresolved.

\Cref{fig:real_robot} presents qualitative physical-robot grasping examples with initially occluded targets. The demonstration illustrates feasibility in the shown cases. A systematic sim-to-real claim would require repeated physical trials, matched fixed-camera controls, and documented camera motions and failure counts.

\section{Statistical Confidence Analysis}
\label{sec:confidence}

\Cref{tab:confidence} reports 95\% confidence intervals from 1,000 bootstrap resamples of OmniCaT evaluation scenes for OmniCam and retrained GenDoP. The listed intervals do not overlap. These estimates characterize variation across evaluation scenes, not variation across training seeds, and do not cover all other experiments. GenDoP is the comparison in this table, but is not the best baseline for every main-table metric.

\begin{table*}[tp]
\centering
\small
\caption{\textbf{Bootstrap 95\% confidence intervals on OmniCaT evaluation scenes} (1,000 resamples). Intervals describe evaluation-scene variability for the two listed methods.}
\label{tab:confidence}
\resizebox{\linewidth}{!}{
\begin{tabular}{l|cc|cc|c|cc}
\toprule
\textbf{Method} & \textbf{ATE $\downarrow$} & \textbf{FDE $\downarrow$} & \textbf{RPE-R $\downarrow$} & \textbf{RPE-T $\downarrow$} & \textbf{Coll.~(\%) $\downarrow$} & \textbf{Vis.~(\%) $\uparrow$} & \textbf{Angle~($^{\circ}$) $\downarrow$} \\
\midrule
GenDoP (OmniCaT) & 1.624{\scriptsize$\pm$0.087} & 2.483{\scriptsize$\pm$0.124} & 1.577{\scriptsize$\pm$0.068} & 0.038{\scriptsize$\pm$0.003} & 30.4{\scriptsize$\pm$1.8} & 33.6{\scriptsize$\pm$2.1} & 27.6{\scriptsize$\pm$1.4} \\
\cellcolor[HTML]{efefef}\textbf{OmniCam (Ours)} & \cellcolor[HTML]{efefef}\textbf{0.868}{\scriptsize$\pm$0.042} & \cellcolor[HTML]{efefef}\textbf{1.462}{\scriptsize$\pm$0.071} & \cellcolor[HTML]{efefef}\textbf{1.135}{\scriptsize$\pm$0.053} & \cellcolor[HTML]{efefef}\textbf{0.027}{\scriptsize$\pm$0.002} & \cellcolor[HTML]{efefef}\textbf{10.4}{\scriptsize$\pm$1.1} & \cellcolor[HTML]{efefef}\textbf{52.0}{\scriptsize$\pm$1.9} & \cellcolor[HTML]{efefef}\textbf{10.9}{\scriptsize$\pm$0.8} \\
\bottomrule
\end{tabular}
}
\end{table*}

\end{document}